\newif\ifJOURNAL
\newif\ifCONF
\newif\ifarXiv
\newif\ifWP
\newif\ifFULL
\newif\ifnotCONF	
\newif\ifTR		
\newif\ifnotTR
  \title[Efficiency of conformalized ridge regression]{Efficiency of conformalized ridge regression}
  \newcommand{\Extra}[1]{}
  \newcommand{\Extra}[1]{}
  \renewcommand{\Extra}[1]{\blue{#1}}
  \newcommand{\blue}[1]{\textcolor{blue}{#1}}
  \newcommand{\bluebegin}{\begingroup\color{blue}}
  \newcommand{\blueend}{\endgroup}
  \newcommand{\citep}[1]{\cite{#1}}
  \newcommand{\citet}[1]{\cite{#1}}
  \newcommand{\citealt}[1]{\cite{#1}}
  \newcommand{\citeyear}[1]{\cite{#1}}
  \newtheorem{theorem}{Theorem}
  \newtheorem{lemma}[theorem]{Lemma}
  \newtheorem{corollary}[theorem]{Corollary}
  \newtheorem{proposition}[theorem]{Proposition}
  \theoremstyle{definition}
  \newtheorem*{remark}{Remark}
\DeclareMathOperator{\Prob}{\mathbb{P}}
\DeclareMathOperator{\Expect}{\mathbb{E}}
\DeclareMathOperator{\var}{\textrm{var}}
\DeclareMathOperator{\cov}{\textrm{cov}}
\newcommand{\law}{\stackrel{{\rm law}}{\longrightarrow}}
  \title{Efficiency of conformalized ridge regression}
  \author{Evgeny Burnaev\\
    \texttt{evgeny.burnaev@datadvance.net}\\
    \texttt{burnaevevgeny@gmail.com}\\[3mm]
  Vladimir Vovk\\
    \texttt{v.vovk@rhul.ac.uk}}
  \title{Efficiency of conformalized ridge regression}
  \author{Evgeny Burnaev\\
    \texttt{evgeny.burnaev@datadvance.net}\\
    \texttt{burnaevevgeny@gmail.com}\\[3mm]
  Vladimir Vovk\\
    \texttt{v.vovk@rhul.ac.uk}}
\begin{document}
\maketitle
\begin{abstract}
  Conformal prediction is a method of producing prediction sets
  that can be applied on top of a wide range of prediction algorithms.
  The method has a guaranteed coverage probability under the standard IID assumption
  regardless of whether the assumptions (often considerably more restrictive)
  of the underlying algorithm are satisfied.
  However, for the method to be really useful it is desirable
  that in the case where the assumptions of the underlying algorithm
  are satisfied,
  the conformal predictor loses little in efficiency
  as compared with the underlying algorithm
  (whereas being a conformal predictor, it has the stronger guarantee of validity).
  In this paper we explore the degree to which this additional requirement of efficiency
  is satisfied in the case of Bayesian ridge regression;
  we find that asymptotically conformal prediction sets differ little
  from ridge regression prediction intervals
  when the standard Bayesian assumptions are satisfied.
\end{abstract}

\ifCONF
  \begin{keywords}
    Bayesian learning, conformal prediction, asymptotic analysis, Bahadur representation
  \end{keywords}
\fi

\section{Introduction}

This paper discusses theoretical properties of the procedure
described in the abstract as applied to Bayesian ridge regression in the primal form.
The procedure itself has been discussed earlier in the Bayesian context under the names
of frequentizing (\citealt{wasserman:2011}, Section~3)
and de-Bayesing (\citealt{vovk/etal:2005book}, p.~101);
in this paper, however, we prefer the name ``conformalizing''.
The procedure has also been studied empirically
(see, e.g., \citealt{vovk/etal:2005book}, Figures~10.1--10.5,
and \citealt{wasserman:2011}, Figure~1,
corrected in \citealt{vovk:2013vapnik}, Figure~11.1).
To our knowledge,
this paper is the first to explore the procedure theoretically.

The purpose of conformalizing is to make prediction algorithms,
first of all Bayesian algorithms, valid under the assumption
that the observations are generated independently from the same probability measure;
we will refer to this assumption as the IID assumption.
This is obviously a desirable step provided that we do not lose much
if the assumptions of the original algorithm happen to be satisfied.
The situation here resembles that in nonparametric hypothesis testing
(see, e.g., \citealt{randles/etal:2004}),
where nonparametric analogues of some classical parametric tests
relying on Gaussian assumptions
turned out to be surprisingly efficient
even when the Gaussian assumptions are satisfied.
\ifFULL\bluebegin
  Namely, in the 1960s Erich Lehmann and others showed
  that the rank methods developed in the 1940s and 1950s
  are almost as efficient as least squares methods for a Gaussian model
  and may be much more efficient when the Gaussian assumption is violated.
  The situation with conformalizing is, however, different
  in that least squares methods lose their validity rather than efficiency
  (\citealt{vovk/etal:2005book}, Figures~10.1--10.5)
  when the Gaussian assumption is violated;
  this is a much more serious loss.
\blueend\fi

We start the main part of the paper from Section~\ref{sec:BRR},
in which we define the ridge regression procedure
and the corresponding prediction intervals
in a Bayesian setting involving strong Gaussian assumptions.
It contains standard material and so no proofs.
The following section, Section~\ref{sec:CRR},
applies the conformalizing procedure to ridge regression
in a way that facilitates theoretical analysis
in the following sections;
the resulting ``conformalized ridge regression''
is similar to but somewhat different from the algorithm
called ``ridge regression confidence machine''
in \citet{vovk/etal:2005book}.

\ifFULL\bluebegin
  In Section~\ref{sec:toy} we discuss theoretical counterparts
  of \citet{wasserman:2011}, Figure~1,
  and in Section~\ref{sec:result} theoretical counterparts
  of \citet{vovk/etal:2005book}, Figures~10.1--10.5.
\blueend\fi

Section~\ref{sec:result} contains our main result.
It shows that asymptotically we lose little when we conformalize
ridge regression and the Gaussian assumptions are satisfied;
namely, conformalizing changes the prediction interval by $O(n^{-1/2})$ with high probability,
where $n$ is the number of observations.
Our main result gives precise asymptotic distributions
for the differences between the left and right end-points
of the prediction intervals output by the Bayesian and conformal predictors.
These are theoretical counterparts of the preliminary empirical results
obtained in \citet{vovk/etal:2005book}
(Figures~10.1--10.5 and Section~8.5, pp.~205--207) and \citet{vovk/etal:2009AOS}.
We then discuss and interpret our main result
using the notions of efficiency and conditional validity
(introduced in the previous two sections).
Section~\ref{sec:details} gives a more explicit description of conformalized ridge regression,
and in Section~\ref{sec:proof} we prove the main result.

\ifFULL\bluebegin
  The Bayesian assumption we are interested in is that the data are generated
  from a Gaussian process (see, e.g., \citealt{rasmussen/williams:2006}).
  When the assumption is satisfied, we study the conditional validity and efficiency
  of Gaussian conformal predictors theoretically
  (Proposition~\ref{prop:toy} and Theorem~\ref{thm:main}).
  For the case where it is violated, we refer the reader to \citet{vovk/etal:2005book},
  \citet{wasserman:2011}, 
  and \citet{vovk:2013vapnik} (see above). 
\blueend\fi

Other recent theoretical work about efficiency and conditional validity
of conformal predictors includes Lei and Wasserman's
\ifCONF(\citeyear{lei/wasserman:2013})\fi
\ifnotCONF\cite{lei/wasserman:2013}\fi.
Whereas our predictor is obtained by conformalizing ridge regression,
Lei and Wasserman's conformal predictor is specially crafted
to achieve asymptotic efficiency and conditional validity.
It is intuitively clear that whereas our algorithm
is likely to produce reasonable results in practice
(in situations where ridge regression produces reasonable results),
Lei and Wasserman's algorithm is primarily of theoretical interest.
A significant advantage of their algorithm, however,
is that it is guaranteed to be asymptotically efficient
and conditionally valid under their regularity assumptions,
whereas our algorithm is guaranteed to be asymptotically efficient
and conditionally valid only under the Gaussian assumptions.
\ifFULL\bluebegin
  Other theoretical work about the method of conformal prediction:
  Chapter~3 of \citet{vovk/etal:2005book}.
\blueend\fi

\ifFULL\bluebegin
  It is important that the variance of noise is assumed to be known in all cases:
  there are problems with Bayesian inference for Gaussian models with unknown variance
  (namely, conjugate analysis does not work: see, e.g., \citealt{kendall:1994}, Section 9.41).
  However, variance is just one parameter, and perhaps it is innocuous in practice to assume it known:
  we can apply, e.g., cross-validation to estimate it.

  Proposed objective function:
  find conformity measures that are as efficient as possible
  under the Bayesian assumption.
\blueend\fi

\ifFULL\bluebegin
  \section{A toy example}
  \label{sec:toy}

  In this section we consider the toy case where there are no $x$s
  (this will be a special case of our general framework corresponding to $x_i=1$ for all $i$);
  in this case conformal predictors reduce to the simplest version \citep{wilks:1941}
  of tolerance intervals (for a more general definition of tolerance predictors see, e.g.,
  \citealt{fraser:1957} and \citealt{guttman:1970}).
  Our statistical model is $y_i\sim N(\theta,\sigma^2)$, $i=1,2,\ldots$,
  with the prior distribution $\theta\sim N(0,\sigma^2/a)$,
  for some parameter $a>0$
  (this is the shrinkage parameter of ridge regression).
  Therefore, the conditional distribution of a test observation $y$
  given $y_1,\ldots,y_l$ is
  \begin{equation*}
    N
    \left(
      \frac{1}{l+a}
      \sum_{i=1}^l
      y_i,
      \left(1+\frac{1}{l+a}\right)\sigma^2
    \right)
  \end{equation*}
  (see, e.g., \citealt{vovk/etal:2005book}, (10.24));
  this gives the Bayesian prediction interval
  $$
    \left(
      \frac{1}{l+a}
      \sum_{i=1}^l
      y_i
      -
      \sqrt{1+\frac{1}{l+a}}
      \sigma
      z_{\epsilon/2},
      \frac{1}{l+a}
      \sum_{i=1}^l
      y_i
      +
      \sqrt{1+\frac{1}{l+a}}
      \sigma
      z_{\epsilon/2}
    \right)
  $$
  for a significance level $\epsilon$,
  where $z_{\delta}:=\Phi^{-1}(1-\delta)$ stands for the upper $\delta$-quantile
  of the standard Gaussian random variable.

  Assuming that the significance level $\epsilon$ is of the form $2k/n$,
  the conformal prediction interval (=tolerance interval) is
  \begin{equation}\label{eq:toy}
    \left(
      y_{(k)},
      y_{(n-k)}
    \right),
  \end{equation}
  where $y_{(i)}$ is the $i$th order statistic.
  If $\epsilon$ is not of this form,
  we define the tolerance interval to be (\ref{eq:toy})
  for the largest $k$ such that $2k/n\le\epsilon$.

  \begin{proposition}\label{prop:toy}
    Fix $\epsilon$ and $\sigma^2$.
    Let the conformal prediction interval be $[C_*,C^*]$
    and the Bayesian prediction interval be $[B_*,B^*]$.
    As $n\to\infty$,
    \begin{align*}
      \sqrt{l}
      (B^*-C^*)
      &\law
      N
      \left(
        0,
        \left(
          \epsilon
  	\left(1-\frac{\epsilon}{2}\right)
  	\pi
  	e^{z_{\epsilon/2}^2}
          -
          1
        \right)
        \sigma^2
      \right),\\
      \sqrt{l}
      (B_*-C_*)
      &\law
      N
      \left(
        0,
        \left(
          \epsilon
  	\left(1-\frac{\epsilon}{2}\right)
  	\pi
  	e^{z_{\epsilon/2}^2}
          -
          1
        \right)
        \sigma^2
      \right).
    \end{align*}  
  \end{proposition}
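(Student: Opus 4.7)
My plan is to derive both statements by establishing an asymptotic linearisation of the upper-endpoint difference $B^{*}-C^{*}$ as an average of i.i.d.\ random variables; the lower-endpoint statement then follows by the reflection $y_{i}\mapsto 2\theta-y_{i}$, which leaves the Gaussian model invariant and swaps the two endpoints. Since the limiting distribution will turn out not to depend on $\theta$, I may work conditionally on $\theta$ and treat the $y_{i}$ as i.i.d.\ $N(\theta,\sigma^{2})$.

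The first ingredient is a trivial expansion of the Bayesian endpoint. Because $l/(l+a)=1+O(l^{-1})$, $\sqrt{1+1/(l+a)}=1+O(l^{-1})$, and $\bar{y}-\theta=O_{p}(l^{-1/2})$, we obtain $B^{*}=\bar{y}+\sigma z_{\epsilon/2}+O_{p}(l^{-1})$, whose error is negligible at scale $l^{-1/2}$. The second, deeper ingredient is the Bahadur representation of the sample quantile $C^{*}=y_{(n-k)}$ with $k=\lfloor n\epsilon/2\rfloor$. Setting $q^{*}:=\theta+\sigma z_{\epsilon/2}$ for the population $(1-\epsilon/2)$-quantile and $f^{*}:=\phi(z_{\epsilon/2})/\sigma$ for the Gaussian density at $q^{*}$, the standard result asserts
\begin{equation*}
C^{*}=q^{*}+\frac{(1-\epsilon/2)-F_{n}(q^{*})}{f^{*}}+o_{p}(n^{-1/2}),
\end{equation*}
with $F_{n}$ the empirical c.d.f.\ of $y_{1},\ldots,y_{n}$; the integer rounding of $k$ costs only $O(n^{-1})$ since $f^{*}>0$.

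Combining the expansions, the $\theta+\sigma z_{\epsilon/2}$ terms cancel and
\begin{equation*}
B^{*}-C^{*}=\frac{1}{n}\sum_{i=1}^{n}\left[(y_{i}-\theta)+\frac{\mathbf{1}\{y_{i}\le q^{*}\}-(1-\epsilon/2)}{f^{*}}\right]+o_{p}(n^{-1/2}),
\end{equation*}
a zero-mean i.i.d.\ sum. The CLT yields $\sqrt{l}(B^{*}-C^{*})\law N(0,V)$, where $V$ is the variance of the bracketed summand. A direct decomposition gives $\var(y_{i}-\theta)=\sigma^{2}$ and $\var(\mathbf{1}\{y_{i}\le q^{*}\})/(f^{*})^{2}=\sigma^{2}\epsilon(1-\epsilon/2)\pi e^{z_{\epsilon/2}^{2}}$ (using $1/\phi^{2}(z_{\epsilon/2})=2\pi e^{z_{\epsilon/2}^{2}}$), while $\cov(y_{i}-\theta,\mathbf{1}\{y_{i}\le q^{*}\})=-\sigma\phi(z_{\epsilon/2})$ via the identity $z\phi(z)=-\phi'(z)$; with $1/f^{*}=\sigma/\phi(z_{\epsilon/2})$, the cross term contributes exactly $-2\sigma^{2}$, cancelling the first variance piece and leaving the claimed $V$.

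The main obstacle is justifying Step 2: the $o_{p}(n^{-1/2})$ remainder in the Bahadur representation at the non-extreme quantile $q^{*}$ of the smooth, strictly positive Gaussian density. This is classical (see, e.g., Serfling's \emph{Approximation Theorems of Mathematical Statistics}); the sharper Bahadur-Kiefer rate is available but not needed here, and the remaining steps reduce to the CLT and the elementary variance computation sketched above.
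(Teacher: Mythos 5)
Your proposal is correct and follows essentially the same route as the paper: the paper likewise reduces $B^{*}-C^{*}$ to the difference between the (centred) sample mean and the sample $(1-\epsilon/2)$-quantile, invokes the joint CLT for these two statistics (a corollary of the Bahadur representation, which you instead write out explicitly as an i.i.d.\ influence-function sum), and obtains the variance from the same three pieces $\sigma^{2}+\epsilon(1-\epsilon/2)\pi e^{z_{\epsilon/2}^{2}}\sigma^{2}-2\sigma^{2}$, with the cross-covariance $-\sigma\phi(z_{\epsilon/2})/f^{*}=-\sigma^{2}$ computed exactly as you do. Your handling of the lower endpoint by reflection and of the $\theta$-dependence by conditioning match the paper's treatment as well.
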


  Proposition~\ref{prop:toy} shows that the Lebesgue measure of the symmetric difference
  between the Bayesian and conformal prediction intervals is $O(l^{-1/2})$ with high probability.
  The constant in the $O$, however, deteriorates as $\epsilon$ shrinks:
  the standard deviation in Proposition~\ref{prop:toy}
  is shown in Figure~\ref{fig:toy_variance}.

  \begin{figure}[tb]
  \begin{center}
    \includegraphics[width=0.48\textwidth]{std_toy.pdf}
    \includegraphics[width=0.48\textwidth]{std_toy_left.pdf}
  \end{center}
  \caption{The standard deviation
    $\sqrt{\epsilon\left(1-\frac{\epsilon}{2}\right)\pi e^{z_{\epsilon/2}^2}-1}$
    in Proposition~\ref{prop:toy}
    as a function of $\epsilon\in(0,1)$ (left) and $\epsilon\in(0,0.05]$ (right)
    for $\sigma=1$.}
  \label{fig:toy_variance}
  \end{figure}

  \begin{proof}
    We would like to apply Theorem~7.4 in \citet{dasgupta:2008}
    (an easy corollary of the Bahadur representation, \citep{bahadur:1966,ghosh:1971})
    to the Gaussian cumulative distribution function $F$ with mean $\theta$ and variance $\sigma^2$
    and $p:=1-\epsilon/2$.
    Therefore, we have (with all expressions conditional on $\theta$):
    \begin{align*}
      p &= 1-\frac{\epsilon}{2},\\
      \xi_p &= \theta + \sigma z_{\epsilon/2},\\
      f(\xi_p) &= \frac{1}{\sqrt{2\pi}\sigma} e^{-z_{\epsilon/2}^2/2},\\
      \int_{x\le\xi_p} x dF(x)
      &=
      \frac{\sigma}{\sqrt{2\pi}} \int_{-\infty}^{z_{\epsilon/2}} y e^{-y^2/2} dy
      +
      \left(1-\frac{\epsilon}{2}\right) \theta\\
      &=
      -\frac{\sigma}{\sqrt{2\pi}} e^{-z_{\epsilon/2}^2/2}
      +
      \left(1-\frac{\epsilon}{2}\right) \theta.
    \end{align*}
    Therefore, the covariance matrix $\Sigma$ (conditional on $\theta$)
    between $\eta_1:=(\bar y - \theta)\sqrt{l}$ and $\eta_2:=y_{(pl)}-\xi_p$
    [the sloppy notation $y_{(pl)}$ will be fixed later] is
    \begin{align*}
      \Sigma_{1,1}
      &=
      \sigma^2,\\
      \Sigma_{1,2}
      &=
      \left(1-\frac{\epsilon}{2}\right) \sqrt{2\pi} \sigma e^{z_{\epsilon/2}^2/2} \theta
      -
      \sqrt{2\pi} \sigma e^{z_{\epsilon/2}^2/2}
      \left(
        -\frac{\sigma}{\sqrt{2\pi}} e^{-z_{\epsilon/2}^2/2}
        +
        \left(1-\frac{\epsilon}{2}\right) \theta
      \right)\\
      &=
      \sigma^2,\\
      \Sigma_{2,2}
      &=
      \epsilon \left(1-\frac{\epsilon}{2}\right) \pi \sigma^2 e^{z_{\epsilon/2}^2}.
    \end{align*}
    (These expression for $\Sigma$ agree with \citealt{dasgupta:2008}, Example~7.5.)
    Therefore,
    \begin{align*}
      B^* - C^*
      &=
      \left(
        \frac{l}{l+a}
        \bar y
        +
        \sqrt{1+\frac{1}{l+a}}
        \sigma z_{\epsilon/2}
      \right)
      -
      y_{(pl)}\\
      &=
      \left(
        \frac{l}{l+a}
        \left(
          \theta + \frac{\eta_1}{\sqrt{l}}
        \right)
        +
        \sqrt{1+\frac{1}{l+a}}
        \sigma z_{\epsilon/2}
      \right)
      -
      \left(
        \theta + \sigma z_{\epsilon/2} + \frac{\eta_2}{\sqrt{l}}
      \right)\\
      &=
      \frac{l}{l+a} \frac{\eta_1}{\sqrt{l}}
      -
      \frac{\eta_2}{\sqrt{l}}
      +
      \sigma z_{\epsilon/2}
      \left(
        1
        -
        \sqrt{1+\frac{1}{l+a}}
      \right)
      -
      \frac{a}{l+a} \theta\\
      &\approx
      \frac{\eta_1}{\sqrt{l}}
      -
      \frac{\eta_2}{\sqrt{l}}.
    \end{align*}
    (The term involving $\theta$ can be ignored for two reasons:
    according to the prior distribution, its mean is 0;
    and it tends to 0 as $O(1/l)$ as $l\to\infty$.)
    It remains to calculate
    $$
      \Expect((\eta_1-\eta_2)^2)
      =
      \Sigma_{1,1} + \Sigma_{2,2} - 2\Sigma_{1,2}
      =
      \left(
        \epsilon
        \left(
          1 - \frac{\epsilon}{2}
        \right)
        \pi
        e^{z_{\epsilon/2}^2}
        -
        1
      \right)
      \sigma^2.
    $$
  \end{proof}

  \begin{remark}
    It is easy to obtain non-asymptotic versions of Proposition~\ref{prop:toy}
    for the expected values of $B^*$ vs $C^*$ and $B_*$ vs $C_*$:
    use, e.g., David's inequality for normal distributions
    (\citealt{dasgupta:2008}, p.~654; \citealt{patel/read:1996}, Section~8.2.5, p.~238;
    \citealt{david/nagaraja:2003}, pp.~64--65 of the first (David, 1970) edition;
    in the first two sources the $\min$ can be ignored
    and the third contains a poor lower bound).
  \end{remark}

  Proposition~\ref{prop:toy} shows that tolerance intervals are asymptotically efficient
  and object and training conditionally valid.
  (In general, however, precise object-conditional validity cannot be attained
  for conformal predictors,
  or for any other methods that rely on the assumption that the unlabelled observations are i.i.d.)
\blueend\fi

\section{Bayesian ridge regression}
\label{sec:BRR}

\ifFULL\bluebegin
  In this section we give only a summary (the main formulas).
\blueend\fi

Much of the notation introduced in this section will be used throughout the paper.
We are given a training sequence $(x_1,y_1),\ldots,(x_{n-1},y_{n-1})$
and a test object $x_{n}$,
and our goal is to predict its label $y_{n}$.
Each \emph{observation} $(x_i,y_i)$, $i=1,\ldots,n$
consists of an \emph{object} $x_i\in\mathbb{R}^p$ and a \emph{label} $y_i\in\mathbb{R}$.
We are interested in the case where the number $n-1$ of training observations is large,
whereas the number $p$ of attributes is fixed.
Our setting is probabilistic;
in particular,
the observations are generated by a probability measure.

In this section we do not assume anything about the distribution of the objects $x_1,\ldots,x_n$,
but given the objects, the labels $y_1,\ldots,y_n$ are generated by the rule
\begin{equation}\label{eq:model}
  y_i = w \cdot x_i + \xi_i,
\end{equation}
where $w$ is a random vector distributed as $N(0,(\sigma^2/a)I)$
(the Gaussian distribution being parameterized by its mean and covariance matrix,
and $I:=I_p$ being the unit $p\times p$ matrix),
each $\xi_i$ is distributed as $N(0,\sigma^2)$,
the random elements $w,\xi_1,\ldots,\xi_n$ are independent (given the objects),
and $\sigma$ and $a$ are given positive numbers.

\ifFULL\bluebegin
  Notice that there is no intercept in the model~(\ref{eq:model});
  we can recover it by adding a dummy attribute $1$ to each object
  (although it then gets shrunk when we apply ridge regression).
\blueend\fi

\ifFULL\bluebegin
  Two settings in this paper:
  Bayesian and weaker non-Bayesian.
  In both cases, we do not lose much.
\blueend\fi

The conditional distribution for the label $y_{n}$ of the test object $x_{n}$
given the training sequence and $x_n$ is
\begin{equation*} 
  N
  \left(
    \hat y_{n},
    (1+g_n) \sigma^2
  \right),
\end{equation*}
where
\begin{align}
  \hat y_n
  &:=
  x'_{n}(X'X+aI)^{-1}X'Y,
  \label{eq:RR}\\
  g_n
  &:=
  x_{n}' (X'X+aI)^{-1} x_{n},
  \label{eq:g-n}
\end{align}
$X=X_{n-1}$ is the design matrix for the training sequence
(the $(n-1)\times p$ matrix whose $i$th row is $x'_i$, $i=1,\ldots,n-1$),
and $Y=Y_{n-1}$ is the vector $(y_1,\ldots,y_{n-1})'$ of the training labels;
see, e.g., \citet{vovk/etal:2005book}, (10.24).
Therefore, the Bayesian prediction interval is
\begin{equation}\label{eq:Bayes}
  (B_*,B^*)
  :=
  \Bigl(
    \hat y_{n}
    -
    \sqrt{1+g_{n}}
    \sigma
    z_{\epsilon/2},
    \hat y_{n}
    +
    \sqrt{1+g_{n}}
    \sigma
    z_{\epsilon/2}
  \Bigr),
\end{equation}
where $\epsilon$ is the significance level
(the permitted probability of error,
so that $1-\epsilon$ is the required coverage probability)
and $z_{\epsilon/2}$ is the $(1-\epsilon/2)$-quantile
of the standard normal distribution $N(0,1)$.

The prediction interval (\ref{eq:Bayes}) enjoys several desiderata:
it is unconditionally valid,
in the sense that its error probability is equal to the given significance level $\epsilon$;
it is also valid conditionally on the training sequence and the test object $x_n$;
finally, this prediction interval is the shortest possible conditionally valid interval.
We will refer to the class of algorithms producing prediction intervals (\ref{eq:Bayes})
(and depending on the parameters $\sigma$ and $a$)
as \emph{Bayesian ridge regression} (BRR).

\section{Conformalized ridge regression}
\label{sec:CRR}

Conformalized ridge regression (CRR) is a special case of conformal predictors;
the latter are defined in, e.g., \citet{vovk/etal:2005book}, Chapter~2,
but we will reproduce the definition in our current context.
  First we define the \emph{CRR conformity measure} $A$ as the function
  that maps any finite sequence $(x_1,y_1),\ldots,(x_n,y_n)$ of observations
  of any length $n$
  to the sequence $(\alpha_1,\ldots,\alpha_n)$
  of the following \emph{conformity scores} $\alpha_i$:
  for each $i=1,\ldots,n$,
  \begin{equation*} 
    \alpha_i
    :=
    \left|
      \left\{
        j=1,\ldots,n \mid r_j\ge r_i
      \right\}
    \right|
    \wedge
    \left|
      \left\{
        j=1,\ldots,n \mid r_j\le r_i
      \right\}
    \right|,
  \end{equation*}
  where $(r_1,\ldots,r_n)'$ is the vector of ridge regression residuals
  $r_i:=y_i-\hat y_i$,
  $$
    \hat y_i:=x_i'(X'_nX_n+aI)^{-1}X'_nY_n
  $$
  (cf.\ (\ref{eq:RR})),
  $X_n$ is the overall design matrix
  (the $n\times p$ matrix whose $i$th row is $x'_i$, $i=1,\ldots,n$),
  and $Y_n$ is the overall vector of labels
  (the vector in $\mathbb{R}^n$ whose $i$th element is $y_i$, $i=1,\ldots,n$).

\begin{remark}
  \ifCONF\begingroup\rm\fi
  We interpret $\alpha_i$ as the degree to which the element $(x_i,y_i)$
  conforms to the full sequence $(x_1,y_1),\ldots,(x_n,y_n)$.
  Intuitively, $(x_i,y_i)$ conforms to the sequence
  if its ridge regression residual
  is neither among the largest nor among the smallest.
  Instead of the simple residuals $r_i$ we could have used deleted or studentized residuals
  (see, e.g., \citealt{vovk/etal:2005book}, pp.~34--35),
  but we choose the simplest definition, which makes calculations feasible.
  Another possibility is to use $-\left|r_i\right|$ as conformity scores;
  this choice leads to what was called ``ridge regression confidence machines''
  in \citet{vovk/etal:2005book}, Chapter~2,
  but its analysis is less feasible.
  \ifCONF\endgroup\fi
\end{remark}

Given a significance level $\epsilon\in(0,1)$,
a training sequence
\ifCONF$(x_1,y_1),\ldots,(x_{n-1},y_{n-1})$, \fi
\ifnotCONF$$(x_1,y_1),\ldots,(x_{n-1},y_{n-1}),$$\fi%
and a test object $x_n$,
\emph{conformalized ridge regression} outputs the prediction set
\begin{equation}\label{eq:prediction-set}
  \Gamma
  :=
  \left\{
    y \mid p^y>\epsilon
  \right\},
\end{equation}
where the \emph{p-values} $p^y$ are defined by
$$
  p^y
  :=
  \frac
  {\left|\left\{i=1,\ldots,n\mid\alpha^y_{i}\le\alpha^y_{n}\right\}\right|}
  {n}
$$
and the conformity scores $\alpha^y_i$ are defined by
\begin{equation}\label{eq:alpha-1}
  (\alpha^y_1,\ldots,\alpha^y_n)
  :=
  A
  \bigl(
    (x_1,y_1),\ldots,(x_{n-1},y_{n-1}),(x_n,y)
  \bigr).
\end{equation}
Define the \emph{prediction interval} output by CRR
as the closure of the convex hull of the prediction set $\Gamma$;
we will use the notation $C_*$ and $C^*$
for the left and right end-points of this interval, respectively.
(Later we will introduce assumptions that will guarantee that $\Gamma$ itself is an interval
from some $n$ on.)
As discussed later in Section~\ref{sec:details},
CRR is computationally efficient:
e.g., its computation time is $O(n\ln n)$ in the on-line mode.

CRR relies on different assumptions about the data
as compared with BRR.
Instead of the Gaussian model~(\ref{eq:model}),
where $\xi_i\sim N(0,\sigma^2)$ and $w\sim N(0,(\sigma^2/a)I)$,
it uses the assumption that is standard in machine learning:
we consider observations $(x_1,y_1),\ldots,(x_n,y_n)$ that are IID
(independent and identically distributed).

\begin{proposition}[\citealt{vovk/etal:2005book}, Proposition~2.3]\label{prop:validity}
  If $(x_1,y_1),\ldots,(x_n,y_n)$ are IID observations,
  the coverage probability of CRR 
  (i.e., the probability of $y_n\in\Gamma$,
  where $\Gamma$ is defined by~(\ref{eq:prediction-set}))
  is at least $1-\epsilon$.
\end{proposition}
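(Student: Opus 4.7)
The plan is to follow the standard validity argument for conformal predictors: the IID assumption yields exchangeability of the observations, this is inherited by the conformity scores through the symmetry of the CRR conformity measure, and a rank argument then bounds the p-value of the true label.

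First I would verify that the conformity measure $A$ is equivariant under permutations of its input. The ridge regression coefficient
\[
  \hat w
  :=
  (X'_nX_n+aI)^{-1}X'_nY_n
  =
  \left(\sum_{j=1}^n x_jx_j'+aI\right)^{-1}\sum_{j=1}^n x_jy_j
\]
depends on the observations only through symmetric sums over $j$, so $\hat w$ is invariant under any permutation $\pi$ of $\{1,\ldots,n\}$ applied to $(x_1,y_1),\ldots,(x_n,y_n)$. The residual $r_i=y_i-x'_i\hat w$ therefore follows the observation sitting in position $i$, while the multiset $\{r_1,\ldots,r_n\}$ is permutation-invariant. Since each $\alpha_i$ is a function of $r_i$ together with this multiset, permuting the input of $A$ by $\pi$ permutes its output by $\pi$ as well. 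Combined with the exchangeability of IID observations, this implies that the random vector $(\alpha^{y_n}_1,\ldots,\alpha^{y_n}_n)$ obtained by substituting $y=y_n$ in (\ref{eq:alpha-1}) is exchangeable.

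Next I would convert exchangeability into a bound on the p-value at the true label. Let $N:=\left|\left\{i\mid\alpha^{y_n}_i\le\alpha^{y_n}_n\right\}\right|$, so that $p^{y_n}=N/n$. Conditionally on the unordered multiset of scores, exchangeability says that $\alpha^{y_n}_n$ is equally likely to coincide with any one of its $n$ entries; the rank in the sorted order, counted with ties resolved in its favour so as to match the ``$\le$'' appearing in the definition of $p^{y_n}$, then stochastically dominates the discrete uniform distribution on $\{1,\ldots,n\}$. Hence $\Prob(p^{y_n}\le\epsilon)=\Prob(N\le n\epsilon)\le\lfloor n\epsilon\rfloor/n\le\epsilon$. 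Since by (\ref{eq:prediction-set}) the event $y_n\notin\Gamma$ coincides with $p^{y_n}\le\epsilon$, this gives $\Prob(y_n\in\Gamma)\ge 1-\epsilon$, as required.

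The main obstacle is making the tie-handling in the last step watertight. The symmetric definition of $\alpha_i$ as the minimum of two counts using non-strict inequalities is engineered precisely so that, even when the residuals collide, the rank of $\alpha^{y_n}_n$ still dominates the discrete uniform distribution; but spelling this out rigorously requires conditioning on the multiset of scores and a short case analysis, which is exactly the book-keeping carried out in \citet{vovk/etal:2005book}, Proposition~2.3, and which I would cite for the details.
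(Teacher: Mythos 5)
Your proof is correct, and it is essentially the same argument as the one the paper relies on: the paper does not prove this proposition itself but defers entirely to \citet{vovk/etal:2005book}, Proposition~2.3, whose proof is precisely your exchangeability-plus-rank argument, with the permutation-equivariance of the ridge-regression conformity measure (which you check correctly via the symmetric sums in $X'_nX_n$ and $X'_nY_n$) being the only CRR-specific ingredient.
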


Proposition~\ref{prop:validity} asserts the unconditional validity of CRR.
Its validity conditional on the training sequence and the test object is not, however, guaranteed
(and it is intuitively clear that ensuring validity conditional on the test object
prevents us from relying on the IID assumption about the objects).
For a discussion of conditional validity in the context of conformal prediction,
see \citet{lei/wasserman:2013}, Section~2, and, more generally, \citet{vovk:2013ML}.
Efficiency (narrowness of the prediction intervals) is not guaranteed either.

The kind of validity asserted in Proposition~\ref{prop:validity}
is sometimes called ``conservative validity''
since $1-\epsilon$ is only a lower bound on the coverage probability.
However, the definition of conformal predictors can be slightly modified
(using randomization for treatment of borderline cases)
to achieve exact validity;
in practice, the difference between conformal predictors and their modified (``smoothed'')
version is negligible.
For details, see, e.g., \citet{vovk/etal:2005book}, p.~27.

\section{Main result}
\label{sec:result}

In this section we show that under the Gaussian model~(\ref{eq:model})
complemented by other natural (and standard) assumptions
CRR is asymptotically close to BRR,
and therefore is approximately conditionally valid and efficient.
On the other hand,
Proposition~\ref{prop:validity} guarantees the unconditional validity of CRR
under the IID assumption,
regardless of whether (\ref{eq:model}) holds.

In this section we assume an infinite sequence of observations
\ifCONF$(x_1,y_1),(x_2,y_2),\ldots$ \fi
\ifnotCONF$$(x_1,y_1),(x_2,y_2),\ldots$$\fi%
but consider only the first $n$ of them
and let $n\to\infty$.
We make both the IID assumption about the objects $x_1,x_2,\ldots$
(the objects are generated independently from the same distribution)
and the assumption (\ref{eq:model});
however, we relax the assumption that $w$ is distributed as $N(0,(\sigma^2/a)I)$.
These are all the assumptions used in our main result:
\begin{description}
\item[(A1)]
  The random objects $x_i\in\mathbb{R}^p$, $i=1,2,\ldots$, are IID.
\item[(A2)]
  The second-moment matrix $\Expect(x_1x'_1)$ of $x_1$ exists and is non-singular.
\item[(A3)]
  The random vector $w\in\mathbb{R}^p$
  is independent of $x_1,x_2,\ldots$\,.
\item[(A4)]
  The labels $y_1,y_2,\ldots$ are generated by
  $y_i=w\cdot x_i+\xi_i$,
  where $\xi_i$ are Gaussian noise variables distributed as $N(0,\sigma^2)$
  and independent between themselves, of the objects $x_i$,
  and of $w$.
\end{description}
Notice that the assumptions imply that the random observations $(x_i,y_i)$, $i=1,2,\ldots$,
are IID given $w$.
It will be clear from the proof
that the assumptions can be relaxed further
(but we have tried to make them as simple as possible).

\begin{theorem}\label{thm:main}
  Under the assumptions (A1)--(A4),
  the prediction sets output by CRR are intervals from some $n$ on almost surely,
  and the differences between the upper and lower ends of the prediction intervals
  for BRR and CRR are asymptotically Gaussian:
  \begin{align}
    \sqrt{n}
    (B^*-C^*)
    &\law
    N
    \left(
      0,
      \frac{\alpha(1-\alpha)}{f^2(\zeta_{\alpha})}
      -
      \sigma^2
      \mu'\Sigma^{-1}\mu
    \right),
    \label{eq:to-prove}\\
    \sqrt{n}
    (B_*-C_*)
    &\law
    N
    \left(
      0,
      \frac{\alpha(1-\alpha)}{f^2(\zeta_{\alpha})}
      -
      \sigma^2
      \mu'\Sigma^{-1}\mu
    \right),
    \label{eq:not-to-prove}
  \end{align}
  where $\alpha:=1-\epsilon/2$,
  $\zeta_{\alpha}:=z_{\epsilon/2}\sigma$ is the $\alpha$-quantile of $N(0,\sigma^2)$,
  $f$ is the density of $N(0,\sigma^2)$,
  $\mu:=\Expect(x_1)$ is the expectation of $x_1$,
  and $\Sigma:=\Expect(x_1x'_1)$ is the second-moment matrix of $x_1$.
\end{theorem}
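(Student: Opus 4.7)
The plan is to reduce $\sqrt{n}(B^{*}-C^{*})$ to an average of i.i.d.\ mean-zero summands and then apply the multivariate central limit theorem. The key observation is that both $B^{*}$ and $C^{*}$ essentially equal $\hat y_{n}$ plus an approximation to the $(1-\epsilon/2)$-quantile of the noise, so the theorem amounts to identifying the joint fluctuation of these two approximations at the scale $n^{-1/2}$.

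On the Bayesian side, (A1)--(A2) yield $X'X/n\to\Sigma$ almost surely, hence $g_{n}=O_{P}(1/n)$ and $B^{*}=\hat y_{n}+\sigma z_{\epsilon/2}+o_{P}(n^{-1/2})$, while the training ridge estimator $\hat w:=(X'X+aI)^{-1}X'Y$ admits the Bahadur-type expansion
\begin{equation*}
\hat w-w=\Sigma^{-1}\frac{1}{n}\sum_{i=1}^{n-1}x_{i}\xi_{i}+o_{P}(n^{-1/2}),
\end{equation*}
with the shrinkage bias $-a(X'X+aI)^{-1}w$ absorbed into the $o_{P}$ term. On the CRR side, let $\hat w^{y}:=(X_{n}'X_{n}+aI)^{-1}X_{n}'Y_{n}$ be the ridge estimator obtained by inserting the candidate $y$ at position $n$. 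The test residual $r_{n}^{y}=y-x_{n}'\hat w^{y}$ is affine in $y$ with slope $1-x_{n}'(X_{n}'X_{n}+aI)^{-1}x_{n}\in(0,1]$, while each $r_{i}^{y}$ with $i<n$ is affine in $y$ with slope $O_{P}(1/n)$; hence for $n$ large enough the rank of $r_{n}^{y}$ among $(r_{1}^{y},\ldots,r_{n}^{y})$ is a non-decreasing step function of $y$, $\alpha_{n}^{y}$ is unimodal in $y$, and $\Gamma$ is an interval, establishing the first assertion of the theorem. The same $O_{P}(1/n)$ bound shows that $C^{*}$ coincides, up to an $o_{P}(n^{-1/2})$ error, with $\hat y_{n}+\hat q_{n}$, where $\hat q_{n}$ is the empirical $\alpha$-quantile ($\alpha:=1-\epsilon/2$) of the training residuals $r_{i}=\xi_{i}-x_{i}'(\hat w-w)$, $i<n$.

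The heart of the argument is a perturbed Bahadur representation for $\hat q_{n}$. A stochastic-equicontinuity bound for the VC-indexed family $\{(x,\xi)\mapsto\mathbf{1}\{\xi\le t+x'u\}\}$, together with $\hat w-w=O_{P}(n^{-1/2})$, yields uniformly on a neighbourhood of $\zeta_{\alpha}$ the expansion
\begin{equation*}
F_{n}^{r}(t)=F_{n}^{\xi}(t)+f(t)\,\bar x'(\hat w-w)+o_{P}(n^{-1/2}),
\end{equation*}
where $F_{n}^{r}$ and $F_{n}^{\xi}$ are the empirical c.d.f.\ of the $r_{i}$ and of the $\xi_{i}$, $\bar x$ is the sample mean of the training objects, and $f$ is the density of $N(0,\sigma^{2})$. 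Inverting this in the usual Bahadur--Ghosh manner, substituting $\bar x\to\mu$ and the expansion of $\hat w-w$, and forming $B^{*}-C^{*}$, yield
\begin{equation*}
\sqrt{n}(B^{*}-C^{*})=\frac{1}{\sqrt{n}}\sum_{i=1}^{n}\left[\frac{\mathbf{1}\{\xi_{i}\le\zeta_{\alpha}\}-\alpha}{f(\zeta_{\alpha})}+\mu'\Sigma^{-1}x_{i}\xi_{i}\right]+o_{P}(1).
\end{equation*}

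The summands are i.i.d.\ and mean zero. Using the independence of $x_{i}$ and $\xi_{i}$ together with the Gaussian identity $\Expect(\xi_{1}\mathbf{1}\{\xi_{1}\le\zeta_{\alpha}\})=-\sigma^{2}f(\zeta_{\alpha})$, the variance of each summand is $\alpha(1-\alpha)/f^{2}(\zeta_{\alpha})+\sigma^{2}\mu'\Sigma^{-1}\mu-2\sigma^{2}\mu'\Sigma^{-1}\mu$, matching~(\ref{eq:to-prove}); the multivariate CLT then delivers the limit. The claim~(\ref{eq:not-to-prove}) follows by an entirely symmetric argument based on the $(\epsilon/2)$-quantile and the identities $f(-\zeta_{\alpha})=f(\zeta_{\alpha})$ and $\Expect(\xi_{1}\mathbf{1}\{\xi_{1}\le-\zeta_{\alpha}\})=-\sigma^{2}f(\zeta_{\alpha})$. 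The main technical obstacle is the perturbed Bahadur step itself: the residuals $r_{i}$ share the same $\hat w$ and so are not i.i.d., so the classical Bahadur--Ghosh argument has to be upgraded into an empirical-process bound that is uniform in the direction of the perturbation $u$; one must also verify that replacing $\hat w^{y}$ by $\hat w$ inside the CRR conformity scores costs only $o_{P}(n^{-1/2})$ uniformly in $y$ over the relevant range of candidates.
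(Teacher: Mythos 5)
Your proposal is correct and follows essentially the same route as the paper's proof: reduce the conformal end-point to the empirical $\alpha$-quantile of the training residuals, invoke a Bahadur-type representation for quantiles of residuals (the paper's Lemma~\ref{lem:Carroll}, after Carroll), and compute the limiting variance of the resulting sum via the identity $\Expect\bigl(\xi_1\mathbf{1}_{\{\xi_1\le\zeta_{\alpha}\}}\bigr)=-\sigma^2 f(\zeta_{\alpha})$, which yields the cross term $-2\sigma^2\mu'\Sigma^{-1}\mu$. The two points you flag as requiring verification---the slope argument showing the prediction set becomes a ray/interval, and the $o_P(n^{-1/2})$ cost of passing from the full-data residuals to the training-only residuals (the $1/(1+g_i)$ adjustment)---are precisely what the paper establishes in Lemmas~\ref{lem:diversity}--\ref{lem:excluded} and Corollaries~\ref{cor:g_i}--\ref{cor:reduction}.
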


The theorem will be proved in Section~\ref{sec:proof},
and in the rest of this section we will discuss it.
We can see from (\ref{eq:to-prove}) and (\ref{eq:not-to-prove})
that the symmetric difference
between the prediction intervals output by BRR and CRR
shrinks to 0 as $O(n^{-1/2})$ in Lebesgue measure with high probability.

Let us first see what the typical values of the standard deviation
(the square root of the variance)
in~(\ref{eq:to-prove}) and~(\ref{eq:not-to-prove}) are.
It is easy to check that the standard deviation is proportional to $\sigma$;
therefore, let us assume $\sigma=1$.
The second term in the variance does not affect it significantly
since $0\le\mu'\Sigma^{-1}\mu\le1$.
Indeed, denoting the covariance matrix of $x_1$ by $C$ and using the Sherman--Morrison formula
(see, e.g., \citealt{henderson/searle:1981}, (3)),
we have:
\begin{multline}\label{eq:second-term}
  \mu'\Sigma^{-1}\mu
  =
  \mu'(C+\mu\mu')^{-1}\mu
  =
  \mu'
  \left(
    C^{-1}
    -
    \frac{C^{-1}\mu\mu'C^{-1}}{1+\mu'C^{-1}\mu}
  \right)
  \mu\\
  =
  \mu'C^{-1}\mu
  -
  \frac{(\mu'C^{-1}\mu)^2}{1+\mu'C^{-1}\mu}
  =
  \frac{\mu'C^{-1}\mu}{1+\mu'C^{-1}\mu}
  \in
  [0,1]
\end{multline}
(we write $[0,1]$ rather than $(0,1)$ because $C$ is permitted to be singular:
see Appendix~\ref{app:computations} for details).
The first term, on the other hand,
can affect the variance more significantly,
and the significant dependence of the variance on $\epsilon$ is natural:
the accuracy obtained from the Gaussian model is better for small $\epsilon$
since it uses all data for estimating the end-points of the prediction interval
rather than relying, under the IID model, on the scarcer information provided by observations
in the tails of the distribution generating the labels.
Figure~\ref{fig:variance} illustrates the dependence
of the standard deviation of the asymptotic distribution on $\epsilon$.
The upper line in it corresponds to $\mu'\Sigma^{-1}\mu=0$
and the lower line corresponds to $\mu'\Sigma^{-1}\mu=1$.
The possible values for the standard deviation lie between the upper and lower lines.
The asymptotic behaviour of the standard deviation as $\epsilon\to0$ is given by
\begin{equation}\label{eq:std}
  \sqrt
  {
    \epsilon
    (1-\epsilon/2)
    \pi
    e^{z_{\epsilon/2}^2}
    -
    \theta
  }
  \sim
  \left(
    -\epsilon\ln\epsilon
  \right)^{-1/2}
\end{equation}
uniformly in $\theta\in[0,1]$.
\ifFULL\bluebegin
  The last equation follows from
  $$
    \epsilon
    \sim
    \frac{1}{\sqrt{2\pi}z_{\epsilon}}
    e^{-z_{\epsilon}^2/2}
  $$
  (\citealt{feller:1970}, Lemma~2 in Chapter~VII)
  and its corollary
  $$
    z_{\epsilon}
    \sim
    \sqrt{-2\ln\epsilon}.
  $$
  On the right-hand side of (\ref{eq:std}),
  we have replaced $\ln(2/\epsilon)$ by simply $\ln(1/\epsilon)=-\ln\epsilon$.
\blueend\fi

\begin{figure}[tb]
  \begin{center}
    \includegraphics[width=0.48\textwidth]{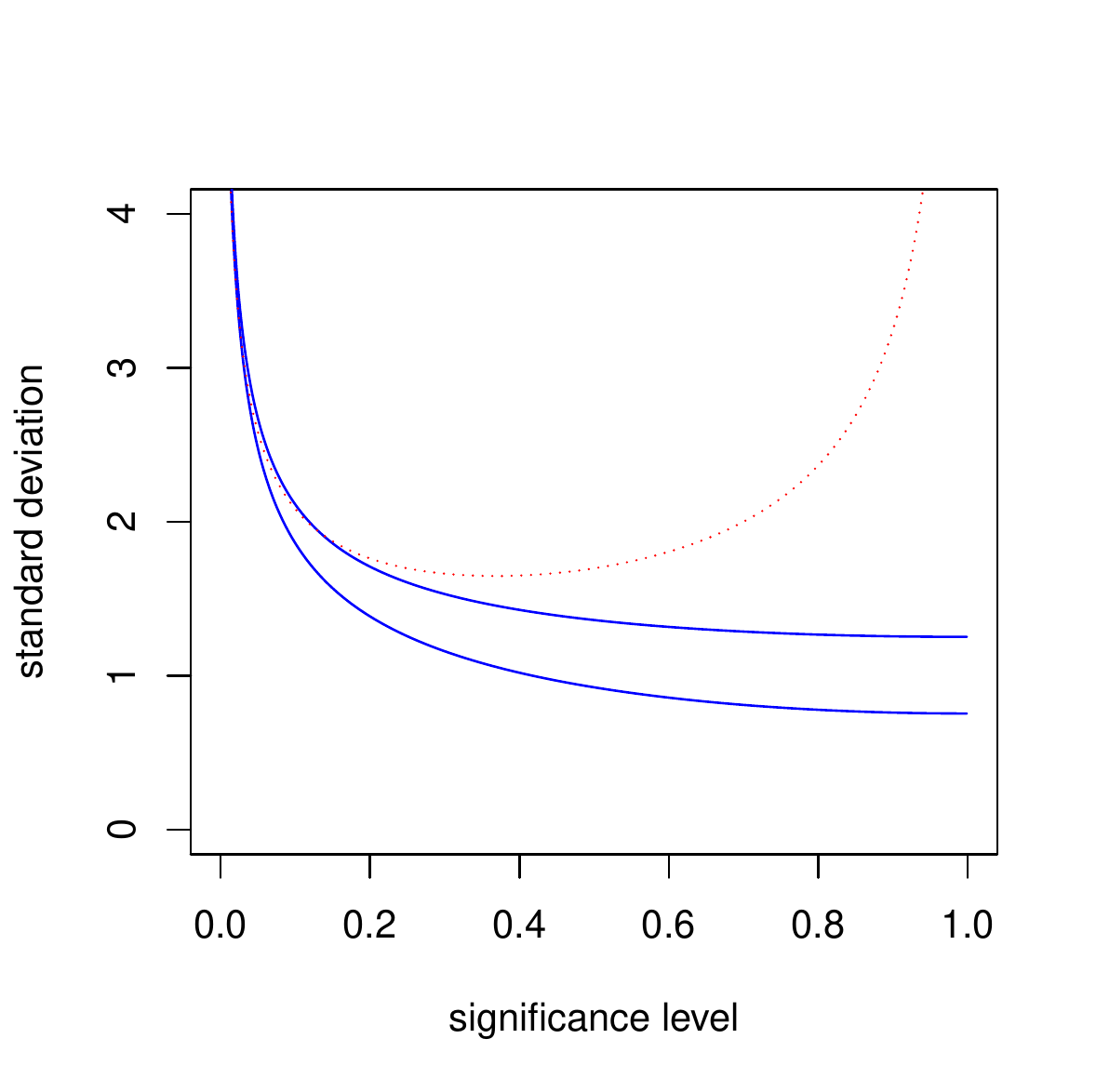}
    \includegraphics[width=0.48\textwidth]{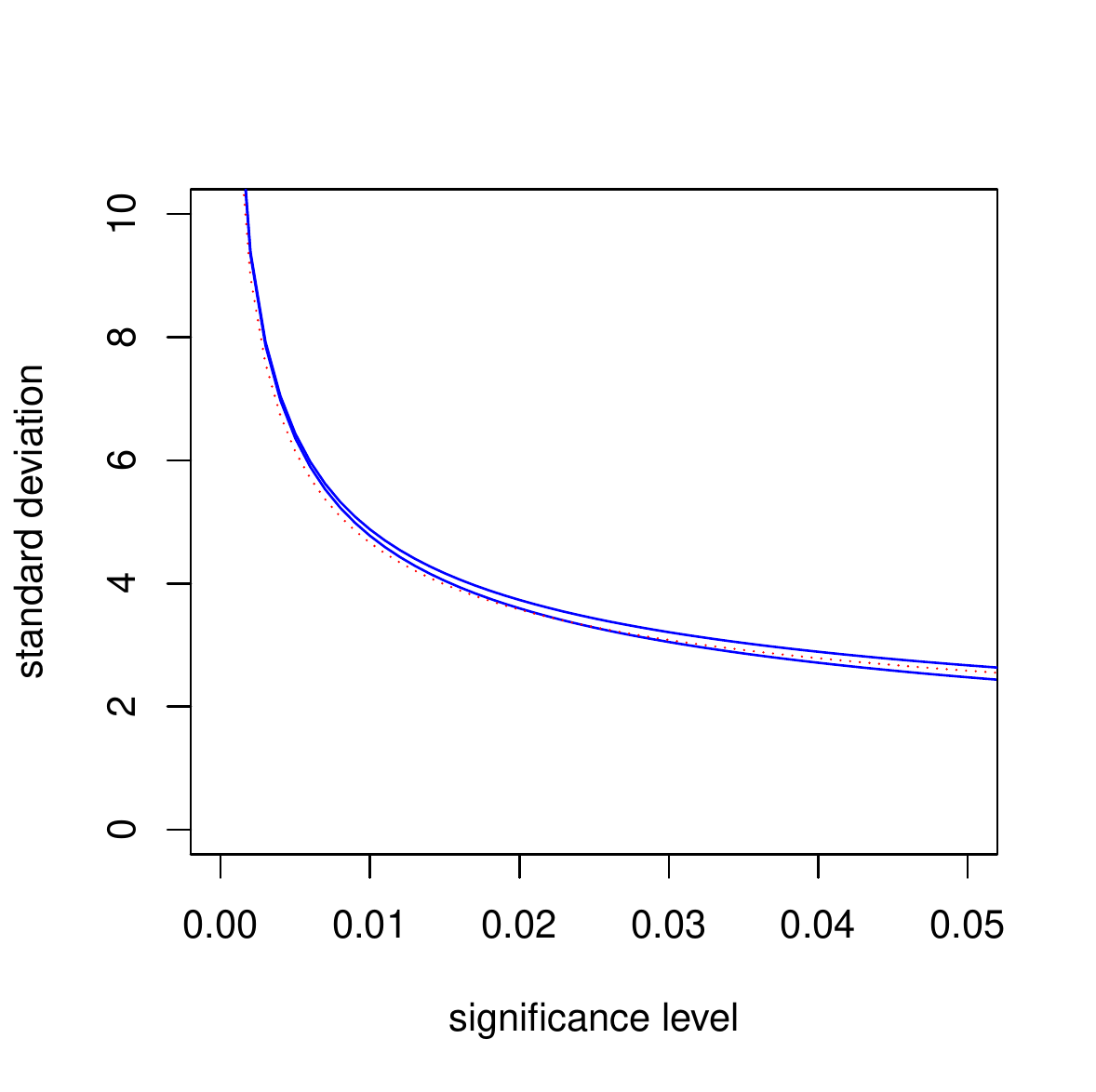}
  \end{center}
  \caption{The limits for the standard deviation
    in Theorem~\ref{thm:main}
    as a function of $\epsilon\in(0,1)$ (left) and $\epsilon\in(0,0.05]$ (right)
    shown as solid (blue) lines;
    the asymptotic expression in~\eqref{eq:std} shown as a dotted (red) line.
    In all cases $\sigma=1$.}
  \label{fig:variance}
\end{figure}

The assumptions~(A1)--(A4) do not involve $a$,
and Theorem~\ref{thm:main} continues to hold if we set $a:=0$;
this can be checked by going through the proof of Theorem~\ref{thm:main}
in Section~\ref{sec:proof}.
Theorem~\ref{thm:main} can thus also be considered
as an efficiency result about conformalizing
the standard non-Bayesian least squares procedure;
this procedure outputs precisely $(B_*,B^*)$ with $a:=0$
as its prediction intervals
(see, e.g., \citealt{seber/lee:2003}, p.~131).
The least squares procedure has guaranteed coverage probability
under weaker assumptions than BRR
(not requiring assumptions about $w$);
however, its validity is not conditional, similarly to CRR.
\ifFULL\bluebegin
  (See, however, \citealt{mccullagh/etal:2009}.)
\blueend\fi

\section{Further details of CRR}
\label{sec:details}

By the definition of the CRR conformity measure,
we can rewrite the conformity scores in~(\ref{eq:alpha-1}) as
\begin{equation}\label{eq:alpha-2}
  \alpha^y_i
  :=
  \left|
    \left\{
      j=1,\ldots,n \mid r^y_j\ge r^y_i
    \right\}
  \right|
  \wedge
  \left|
    \left\{
      j=1,\ldots,n \mid r^y_j\le r^y_i
    \right\}
  \right|,
\end{equation}
where the vector of residuals $(r^y_1,\ldots,r^y_n)'$ is $(I_n-H_n)Y^y$,
$I_n$ is the unit $n\times n$ matrix,
$H_n:=X_n(X'_nX_n+aI)^{-1}X'_n$ is the hat matrix,
$X_n$ is the overall design matrix
(the $n\times p$ matrix whose $i$th row is $x'_i$, $i=1,\ldots,n$),
and $Y^y$ is the overall vector of labels
with the label of the test object set to $y$
(i.e., $Y^y$ is the vector in $\mathbb{R}^n$ whose $i$th element is $y^i$, $i=1,\ldots,n-1$,
and whose $n$th element is $y$).
If we modify the definition of CRR replacing (\ref{eq:alpha-2}) by $\alpha^y_i:=-r^y_i$,
we will obtain the definition of \emph{upper CRR};
and if we replace (\ref{eq:alpha-2}) by $\alpha^y_i:=r^y_i$,
we will obtain the definition of \emph{lower CRR}.
It is easy to see that the prediction set $\Gamma$ output by CRR
at significance level $\epsilon$
is the intersection of the prediction sets output by upper and lower CRR
at significance levels $\epsilon/2$.
We will concentrate on upper CRR in the rest of this paper:
lower CRR is analogous,
and CRR is determined by upper and lower CRR.

Let us represent the upper CRR prediction set in a more explicit form
(following \citealt{vovk/etal:2005book}, Section~2.3).
We are given the training sequence $(x_1,y_1),\ldots,(x_{n-1},y_{n-1})$
and a test object $x_n$;
let $y$ be a postulated label for $x_n$ and
$$
  Y^y
  :=
  (y_1,\ldots,y_{n-1},y)'
  =
  (y_1,\ldots,y_{n-1},0)'
  +
  y(0,\ldots,0,1)'
$$
be the vector of labels.
The vector of conformity scores is $-(I_n-H_n)Y^y=-A-yB$,
where
\begin{align*}
  A&:=(I_n-H_n)(y_1,\ldots,y_{n-1},0)',\\
  B&:=(I_n-H_n)(0,\ldots,0,1)'.
\end{align*}
The components of $A$ and $B$, respectively,
will be denoted by $a_1,\ldots,a_n$ and $b_1,\ldots,b_n$.

\ifFULL\bluebegin
  \begin{remark}
    This remark discusses the sign of $b_n$.
    If $a>0$, our discussion will be applicable to the space $\mathbb{R}^{n+p}$
    containing the dummy objects
    (however, we continue using $b_n$ as the notation for the last component of $B\in\mathbb{R}^{n+p}$).
    Since $B$ is a projection of $(0,\ldots,0,1)'$,
    $b_n\ge0$.
    Moreover, $b_n>0$ unless $(0,\ldots,0,1)'$ is orthogonal to $\mathcal{C}^{\perp}$, i.e.,
    belongs to $\mathcal{C}$:
    in other words,
    unless the first $n-1$ objects lie in one hyperplane and $x_n$ lies outside it.
    Of course, $b_n=0$ is only possible when $a=0$.
  \end{remark}
\blueend\fi

If we define
\begin{equation}\label{eq:S}
  S_i
  :=
  \left\{
    y
    \mid
    -a_i-b_iy \le -a_n-b_ny
  \right\},
\end{equation}
the definition of the p-values can be rewritten as
$$
  p^y
  :=
  \frac
  {\left|\left\{i=1,\ldots,n\mid y\in S_i\right\}\right|}
  {n};
$$
remember that the prediction set is defined by (\ref{eq:prediction-set}).
As shown (under a slightly different definition of $S_i$)
in \citet{vovk/etal:2005book}, pp.~30--34,
the prediction set can be computed efficiently,
in time $O(n\ln n)$ in the on-line mode.

\section{Proof of Theorem~\ref{thm:main}}
\label{sec:proof}

For concreteness, we concentrate on the convergence (\ref{eq:to-prove})
for the upper ends of the conformal and Bayesian prediction intervals.
We split the proof into a series of steps.

\subsection*{Regularizing the rays in upper CRR}

The upper CRR looks difficult to analyze in general,
since the sets (\ref{eq:S}) may be rays pointing in the opposite directions.
Fortunately, the awkward case $b_n\le b_i$ ($i<n$)
will be excluded for large $n$ under our assumptions
(see Lemma~\ref{lem:excluded} below).
The following lemma gives a simple sufficient condition for its absence.

\begin{lemma}\label{lem:diversity}
  Suppose that, for each $c\in\mathbb{R}^p\setminus\{0\}$,
  \begin{equation}\label{eq:diversity}
    (c\cdot x_n)^2
    <
    \sum_{i=1}^{n-1} (c\cdot x_i)^2
    +
    a\left\|c\right\|^2,
  \end{equation}
  where $\left\|\cdot\right\|$ stands for the Euclidean norm.
  Then $b_n>b_i$ for all $i=1,\ldots,n-1$.
\end{lemma}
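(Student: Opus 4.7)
The plan is to obtain a closed-form expression for the components of $B=(I_n-H_n)e_n$ by means of the Sherman--Morrison formula. Setting $A := X_{n-1}'X_{n-1}+aI$, so that $X_n'X_n+aI = A + x_n x_n'$, and noting that $A$ is positive definite under the hypothesis (\ref{eq:diversity}) (the only nontrivial case being $a=0$, in which a nonzero null vector $c$ of $A$ would make the right-hand side of (\ref{eq:diversity}) vanish and contradict the strict inequality), I expect to obtain
\begin{equation*}
  b_n = \frac{1}{1+x_n'A^{-1}x_n},
  \qquad
  b_i = -\frac{x_i'A^{-1}x_n}{1+x_n'A^{-1}x_n} \quad (i<n),
\end{equation*}
so that
\begin{equation*}
  b_n - b_i = \frac{1+x_i'A^{-1}x_n}{1+x_n'A^{-1}x_n}.
\end{equation*}
Since $A\succ 0$ forces the denominator to be strictly positive, the lemma reduces to showing that $x_i'A^{-1}x_n > -1$ for every $i<n$.

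The next step is to reinterpret the diversity hypothesis. Written as $c'(A-x_nx_n')c > 0$ for all $c\ne 0$, it is precisely the matrix inequality $A \succ x_n x_n'$, which (either by a Schur-complement argument, or by testing against $c = A^{-1}x_n$) is equivalent to the scalar bound $x_n'A^{-1}x_n < 1$. The parallel positive-semidefinite inequality $A \succeq x_i x_i'$ is automatic for each $i<n$ because $A - x_i x_i' = \sum_{j\neq i,\, j\le n-1} x_j x_j' + aI \succeq 0$; testing this against $c = A^{-1}x_i$ yields the (weaker but sufficient) bound $x_i'A^{-1}x_i \le 1$.

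Combining these two inequalities via Cauchy--Schwarz in the inner product $\langle u,v\rangle := u'A^{-1}v$ would then give
\begin{equation*}
  \left|x_i'A^{-1}x_n\right|
  \le
  \sqrt{x_i'A^{-1}x_i}\,\sqrt{x_n'A^{-1}x_n}
  <
  1,
\end{equation*}
where the strict inequality uses the strict bound $x_n'A^{-1}x_n < 1$ (the degenerate case $x_n=0$ is immediate from the explicit formulas, which give $b_n=1$ and $b_i=0$). Hence $1+x_i'A^{-1}x_n > 0$ and the lemma follows.

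The main obstacle I anticipate is the conceptual step of recognizing that the diversity hypothesis is exactly the rank-one positive-definite condition $A \succ x_n x_n'$; once this identification is made, everything else is routine matrix algebra. The only subtlety is that at least one of the two Cauchy--Schwarz factors must be \emph{strictly} less than $1$, and this strictness is delivered precisely by the strict inequality in (\ref{eq:diversity}) applied to $x_n$ (not to the $x_i$ with $i<n$, for which equality is permitted).
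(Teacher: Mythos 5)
Your proof is correct, but it follows a genuinely different route from the paper's. The paper first reduces $a>0$ to $a=0$ by appending $p$ dummy observations $\sqrt{a}e_j$ labelled $0$, and then argues geometrically: with $a=0$, $I_n-H_n$ is the orthogonal projection onto $\mathcal{C}^{\perp}$ (the orthogonal complement of the column space of $X_n$), so $B$ is the projection of $(0,\ldots,0,1)'$ and $b_n=\left\|B\right\|^2\ge0$; having $b_n\le b_i$ for some $i<n$ forces the angle between $\mathcal{C}^{\perp}$ and the hyperplane $\mathbb{R}^{n-1}\times\{0\}$ to be at most $45^{\circ}$, i.e., forces the existence of $c$ with $c\cdot x_n=1$ and $\sum_{i<n}(c\cdot x_i)^2\le1$, which is precisely the negation of the (rescaled) hypothesis~(\ref{eq:diversity}). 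You instead work algebraically: the Sherman--Morrison formulas you write for $b_n$ and $b_i$ are exactly those derived in Appendix~\ref{app:computations} for the computation of $t_i$, your identification of~(\ref{eq:diversity}) with $X'X+aI\succ x_nx_n'$ (hence $x_n'(X'X+aI)^{-1}x_n<1$) is correct, the automatic bound $x_i'(X'X+aI)^{-1}x_i\le1$ for $i<n$ is right, and the Cauchy--Schwarz step in the $(X'X+aI)^{-1}$ inner product correctly delivers $1+x_i'(X'X+aI)^{-1}x_n>0$ with the needed strictness coming from the $x_n$ factor. Each approach has its merits: the paper's is conceptually short and the dummy-observation reduction is a reusable trick, but the $45^{\circ}$ step is stated contrapositively and tersely; yours is fully explicit, treats $a\ge0$ uniformly, and yields the formula $b_n-b_i=\bigl(1+x_i'(X'X+aI)^{-1}x_n\bigr)/\bigl(1+x_n'(X'X+aI)^{-1}x_n\bigr)$, which the paper needs anyway in the later computation of $t_i$, so your route integrates the lemma more tightly with the rest of the argument.
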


\noindent
Intuitively, in the case of a small $a$,
(\ref{eq:diversity}) being violated for some $c\ne0$
means that all $x_1,\ldots,x_{n-1}$ lie approximately
in the same hyperplane, and $x_n$ is well outside it.
The condition~(\ref{eq:diversity}) can be expressed by saying that the matrix
$\sum_{i=1}^{n-1}x_ix'_i-x_nx'_n+aI$ is positive definite.

\ifFULL\bluebegin
  A more detailed discussion:
  in the case $a=0$,
  (\ref{eq:diversity}) means that all $x_1,\ldots,x_{n-1}$ are essentially
  in the same hyperplane, and $x_n$ is outside it.
  We do not expect this to happen for typical data sets if $p\ll n$
  (check this for UCI data sets?).
  However, if $n\ll p$, (\ref{eq:diversity}) is not surprising at all for $a=0$:
  except for degenerate cases, we can make all $c\cdot x_i$, $i=1,\ldots,n-1$,
  equal to 0 for $c\ne 0$.
  However, (\ref{eq:diversity}) will still hold in this case if $a>0$
  and $\left\|x_n\right\|\le\sqrt{a}$.
\blueend\fi

\medskip

\begin{proof}
  First we assume $a=0$ (so that ridge regression becomes least squares);
  an extension to $a\ge0$ will be easy.
  In this case $H_n$ is the projection matrix
  onto the column space $\mathcal{C}\subseteq\mathbb{R}^n$
  of the overall design matrix $X_n$
  and $I_n-H_n$ is the projection matrix
  onto the orthogonal complement $\mathcal{C}^{\perp}$ of $\mathcal{C}$.
  We can have $b_n\le b_i$ for $i<n$
  (or even $b_n^2\le b_1^2+\cdots+b_{n-1}^2$)
  only if the angle between $\mathcal{C}^{\perp}$ and the hyperplane $\mathbb{R}^{n-1}\times\{0\}$
  is $45^{\circ}$ or less;
  in other words, if the angle between $\mathcal{C}$ and that hyperplane is $45^{\circ}$ or more;
  in other words, if there is an element $(c\cdot x_1,\ldots,c\cdot x_n)'$
  of $\mathcal{C}$ such that its last coordinate is $c\cdot x_n=1$
  and its projection $(c\cdot x_1,\ldots,c\cdot x_{n-1})'$ onto the other coordinates
  has length at most 1.

  To reduce the case $a>0$ to $a=0$
  add the $p$ dummy objects $\sqrt{a}e_i\in\mathbb{R}^p$,
  $i=1,\ldots,p$,
  labelled by 0 at the beginning of the training sequence;
  here $e_1,\ldots,e_p$ is the standard basis of $\mathbb{R}^p$.
\end{proof}

\begin{lemma}\label{lem:excluded}
  The case $b_n\le b_i$ for $i<n$ is excluded from some $n$ on almost surely under (A1)--(A4).
\end{lemma}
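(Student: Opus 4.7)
The plan is to reduce the claim to Lemma~\ref{lem:diversity} and then verify the sufficient condition there by two standard law-of-large-numbers arguments.

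First, by Lemma~\ref{lem:diversity}, it suffices to prove that, almost surely from some $n$ on, the symmetric $p\times p$ matrix
\begin{equation*}
  M_n
  :=
  \sum_{i=1}^{n-1} x_i x'_i
  -
  x_n x'_n
  +
  aI_p
\end{equation*}
is positive definite (for any $c\ne0$ we have $c'M_nc>0$, which is exactly the condition \eqref{eq:diversity}).

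Second, I would bound the smallest eigenvalue of $\sum_{i=1}^{n-1}x_ix'_i$ from below. Under (A1) the random matrices $x_ix'_i$ are IID, and by (A2) they have finite mean $\Sigma=\Expect(x_1x'_1)$. The strong law of large numbers (applied entrywise) gives
\begin{equation*}
  \frac{1}{n-1}\sum_{i=1}^{n-1} x_i x'_i
  \;\longrightarrow\;
  \Sigma
  \quad\text{a.s.}
\end{equation*}
Since $\lambda_{\min}$ is continuous on symmetric matrices and, by (A2), $\lambda:=\lambda_{\min}(\Sigma)>0$, we conclude that almost surely $\lambda_{\min}\bigl(\sum_{i=1}^{n-1}x_ix'_i\bigr)\ge (n-1)\lambda/2$ for all large enough $n$.

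Third, I would control $x_nx'_n$ by showing $\|x_n\|^2=o(n)$ a.s. This uses (A1)--(A2) through the standard fact that for IID real random variables $Z_i$ with $\Expect|Z_1|<\infty$, one has $Z_n/n\to0$ a.s.; applying this to $Z_i=\|x_i\|^2$ (whose mean is $\operatorname{tr}\Sigma<\infty$) gives $\|x_n\|^2/n\to0$, and hence $(c\cdot x_n)^2\le\|x_n\|^2=o(n)$ uniformly in unit vectors $c$.

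Combining the two bounds, for any $c\in\mathbb{R}^p$ with $\|c\|=1$ and $n$ large enough,
\begin{equation*}
  c'M_n c
  \;\ge\;
  (n-1)\lambda/2 \;-\; \|x_n\|^2 \;+\; a
  \;\longrightarrow\;
  +\infty
  \quad\text{a.s.},
\end{equation*}
so $M_n$ is positive definite from some $n$ on almost surely, completing the proof via Lemma~\ref{lem:diversity}. The only mild obstacle is making the lower bound uniform in $c$, which is handled automatically by passing through the smallest-eigenvalue formulation rather than arguing pointwise in $c$.
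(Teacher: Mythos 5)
Your proof is correct and follows essentially the same route as the paper's: reduce to the positive-definiteness condition of Lemma~\ref{lem:diversity}, bound $\lambda_{\min}\bigl(\sum_{i=1}^{n-1}x_ix_i'\bigr)$ below by $(n-1)\lambda_{\min}(\Sigma)/2$ via the strong law of large numbers and (A2), and control $x_nx_n'$ via $\|x_n\|^2=o(n)$ a.s. The only cosmetic difference is that the paper sets $a:=0$ without loss of generality, whereas you carry the (harmless) $+aI$ term along.
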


\begin{proof}
  We will check that~(\ref{eq:diversity}) holds from some $n$ on.
  Let us set, without loss of generality, $a:=0$.
  Let $\Sigma_l := \frac{1}{l}\sum_{i=1}^lx_ix'_i$.
  Since $\lim_{l\to\infty}\Sigma_l = \Sigma$ a.s.,
  \begin{equation*}
    \left|
      \lambda_{\min}(\Sigma_l)
      -
      \lambda_{\min}(\Sigma)
    \right|
    \to
    0
    \quad
    (l\to\infty)
    \quad
    \text{a.s.},
  \end{equation*}
  where $\lambda_{\min}(\cdot)$ is the smallest eigenvalue of the given matrix.
  Since $\left\|x_n\right\|^2/n \to 0$ a.s.,
  \begin{multline*}     
    \frac{1}{n-1}
    \sum_{i=1}^{n-1}
    (c\cdot x_i)^2
    =
    c'\Sigma_{n-1}c
    \ge
    \lambda_{\min}(\Sigma_{n-1})
    \left\|c\right\|^2\\    
    >
    \frac12
    \lambda_{\min}(\Sigma)
    \left\|c\right\|^2
    >
    \frac{\left\|c\right\|^2\left\|x_n\right\|^2}{n-1}
    \ge
    \frac{(c\cdot x_n)^2}{n-1}
  \end{multline*}      
  for all $c\ne0$ from some $n$ on.
\end{proof}

\subsection*{Simplified upper CRR}

Let us now find the upper CRR prediction set
under the assumption that $b_n>b_i$ for all $i<n$
(cf.\ Lemmas~\ref{lem:diversity} and~\ref{lem:excluded} above).
In this case each set (\ref{eq:S}) is
$$
  S_i=(-\infty,t_i],
  \qquad
  \text{where }
  t_i
  :=
  \frac{a_i-a_n}{b_n-b_i},
$$
except for $S_n:=\mathbb{R}$;
notice that only $t_1,\ldots,t_{n-1}$ are defined.
The p-value $p^y$ for any potential label $y$ of $x_n$ is
\begin{equation*}
  p^y
  =
  \frac{\left|\{i=1,\ldots,n\mid y\in S_i\}\right|}{n}
  =
  \frac{\left|\{i=1,\ldots,n-1\mid t_i\ge y\}\right|+1}{n}.
\end{equation*}
Therefore, the upper CRR prediction set
at significance level $\epsilon/2$
is the ray
$$
  (-\infty,t_{(k_n)}],
$$
where $k_n:=\lceil(1-\epsilon/2)n\rceil$
and $t_{(k)}=t_{k:(n-1)}$ stands, as usual, for the $k$th order statistic
of $t_1,\ldots,t_{n-1}$.

\ifFULL\bluebegin
  Let us check the formula $k_n:=\lceil(1-\epsilon/2)n\rceil$.
  A potential label $y$ is not in the prediction set
  at the significance level $\epsilon/2$
  if and only if
  $$
    \frac{\left|t_i\ge y\right|+1}{n}
    \le
    \frac{\epsilon}{2}
  $$
  (where $i$ ranges over $1,\ldots,n-1$),
  i.e.,
  if and only if
  $$
    \left|t_i\ge y\right|
    \le
    \frac{n\epsilon}{2} - 1,
  $$
  i.e.,
  if and only if
  $$
    \left|t_i\ge y\right|
    \le
    \left\lfloor\frac{n\epsilon}{2} - 1\right\rfloor,
  $$
  i.e.,
  if and only if
  $$
    \left|t_i<y\right|
    \ge
    (n-1) - \left\lfloor\frac{n\epsilon}{2} - 1\right\rfloor,
  $$
  i.e.,
  if and only if
  \begin{equation}\label{eq:k-n}
    \left|t_i<y\right|
    \ge
    k,
  \end{equation}
  where
  $$
    k
    =
    (n-1) - \left\lfloor\frac{n\epsilon}{2} - 1\right\rfloor
    =
    n - \left\lfloor\frac{n\epsilon}{2}\right\rfloor
    =
    n + \left\lceil-\frac{n\epsilon}{2}\right\rceil
    =
    \left\lceil n-\frac{n\epsilon}{2}\right\rceil
    =
    \left\lceil n\left(1-\frac{\epsilon}{2}\right)\right\rceil
  $$
  (we have used $-\lfloor x\rfloor=\lceil-x\rceil$).
  It remains to check that (\ref{eq:k-n}) is equivalent to
  $$
    t_{(k)}<y,
  $$
  which gives the complement
  $$
    (t_{(k)},\infty)
  $$
  of the prediction interval.
\blueend\fi

\subsection*{Proof proper}

As before,
$X$ stands for the design matrix $X_{n-1}$
based on the first $n-1$ observations.
A simple but tedious computation (see Appendix~\ref{app:computations}) gives
\begin{equation}\label{eq:t}
  t_i
  =
  \frac{a_i-a_n}{b_n-b_i}
  =
  \hat y_n
  +
  (y_i - \hat y_i)
  \frac{1+g_n}{1+g_i},
\end{equation}
where $g_i:=x'_i(X'X+aI)^{-1}x_n$ (cf.\ (\ref{eq:g-n})).
The first term in (\ref{eq:t}) is the centre
of the Bayesian prediction interval~(\ref{eq:Bayes});
it does not depend on $i$.
We can see that
\begin{equation}\label{eq:B-C}
  B^*-C^*
  =
  (1+g_n)
  \left(
    z_{\epsilon/2}\sigma
    -
    V_{(k_n)}
  \right),
\end{equation}
where $V_{(k_n)}$ is the $k_n$th order statistic in the series
\begin{equation}\label{eq:V}
  V_i
  :=
  \frac{r_i}{1+g_i}
\end{equation}
of residuals $r_i:=y_i-\hat y_i$ adjusted by dividing by $1+g_i$.
The behaviour of the order statistics of residuals
is well studied: see, e.g., the theorem in \citet{carroll:1978}.
The presence of $1+g_i$ complicates the situation,
and so we first show that $g_i$ is small with high probability.

\begin{lemma}\label{lem:hin}
  Let $\eta_1,\eta_2,\ldots$ be a sequence of IID random variables with a finite second moment.
  Then $\max_{i=1,\ldots,n}\left|\eta_i\right|=o(n^{1/2})$
  in probability (and even almost surely) as $n\to\infty$.
\end{lemma}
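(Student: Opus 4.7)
The plan is to prove the stronger almost-sure statement (which then implies the in-probability one) via the first Borel--Cantelli lemma.

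First I would fix an arbitrary $\epsilon>0$ and bound the tail sum of the individual variables. Using the standard layer-cake identity
\[
  \Expect(\eta_1^2)
  =
  \int_0^{\infty} 2t \, \Prob(\left|\eta_1\right|>t) \, dt,
\]
a change of variable $t=\epsilon\sqrt{s}$ yields
\[
  \sum_{n=1}^{\infty}
  \Prob(\left|\eta_n\right|>\epsilon\sqrt{n})
  \le
  \int_0^{\infty}
  \Prob(\left|\eta_1\right|>\epsilon\sqrt{s}) \, ds
  =
  \frac{\Expect(\eta_1^2)}{\epsilon^2}
  <
  \infty.
\]
By the first Borel--Cantelli lemma, almost surely only finitely many $n$ satisfy $\left|\eta_n\right|>\epsilon\sqrt{n}$.

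Next, I would convert this pointwise bound into a bound on the running maximum. Fix an $\omega$ in the almost-sure event and choose $N=N(\omega,\epsilon)$ such that $\left|\eta_n(\omega)\right|\le\epsilon\sqrt{n}$ for all $n>N$. Then for every $n>N$,
\[
  \max_{i=1,\ldots,n}\left|\eta_i(\omega)\right|
  \le
  \max\Bigl(
    \max_{i\le N}\left|\eta_i(\omega)\right|,\;
    \epsilon\sqrt{n}
  \Bigr),
\]
and since the first term is a fixed finite constant, it is eventually dominated by $\epsilon\sqrt{n}$. Hence $\limsup_{n\to\infty} n^{-1/2}\max_{i\le n}\left|\eta_i\right|\le\epsilon$ almost surely. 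Since this holds on an almost-sure event for each rational $\epsilon>0$, intersecting over a countable collection gives $\max_{i\le n}\left|\eta_i\right|=o(\sqrt{n})$ almost surely, which is in particular convergence in probability.

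There is no real obstacle here; the only point requiring mild care is that Borel--Cantelli controls $\left|\eta_n\right|$ at the single index $n$, not the maximum, so one needs the short argument in the second paragraph to pass from an eventual pointwise bound to a bound on the running maximum. The finite second-moment hypothesis is used exactly once, in showing summability of the tail probabilities.
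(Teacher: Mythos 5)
Your proof is correct, but it takes a different route from the paper's. The paper applies the strong law of large numbers to the sequence $\eta_i^2$: since $\frac1n\sum_{i=1}^n\eta_i^2$ converges a.s., the increments satisfy $\eta_n^2/n\to0$ a.s., and the bound on the running maximum follows (the paper leaves that last Ces\`aro-type step implicit, whereas you spell out the analogous step explicitly in your second paragraph). You instead use the layer-cake identity to show $\sum_n\Prob(\left|\eta_n\right|>\epsilon\sqrt{n})\le\Expect(\eta_1^2)/\epsilon^2<\infty$ and invoke the first Borel--Cantelli lemma; your monotonicity comparison of the sum with the integral and the subsequent passage to the maximum are both sound. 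The trade-off is that the paper's argument is a one-liner resting on the (heavier) Kolmogorov SLLN, while yours is more elementary and self-contained, needing only a tail-sum estimate and Borel--Cantelli, and it isolates exactly where the finite second moment enters. Both establish the stronger almost-sure conclusion, which is all the paper needs.
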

\begin{proof}
  \ifFULL\bluebegin
    \textbf{This is the original overcomplicated proof:}
    We will prove the equivalent statement that $\max_{i=1,\ldots,n}Y_i=o(n)$ in probability
    provided $Y_1,Y_2,\ldots$ is a sequence of nonnegative IID random variables that have a finite first moment.
    (Indeed, the latter statement can be applied
    to the sequence $Y_i:=\eta_i^2$.)
    Fix a sequence $Y_1,Y_2,\ldots$ of nonnegative IID random variables having a finite first moment
    and suppose there are $\epsilon,\delta>0$ such that $\max_{i=1,\ldots,n}Y_i>\epsilon n$
    with probability at least $\delta$ for infinitely many $n$.
    For such $n$ we have $Y_1>\epsilon n$ with probability at least
    $1-(1-\delta)^{1/n}\sim-\ln(1-\delta)/n$ ($n\to\infty$).
    This immediately implies that the expected value of $Y_1$ is infinite.
  \blueend\fi
  By the strong law of large numbers the sequence
  $\frac1n\sum_{i=1}^n\eta_i^2$ converges a.s.\ as $n\to\infty$,
  and so $\eta_n^2/n\to0$ a.s.
  This implies that
  $\max_{i=1,\ldots,n}\left|\eta_i\right|=o(n^{1/2})$ a.s.
\end{proof}

\begin{corollary}\label{cor:g_i}
  Under the conditions of the theorem, $\max_{i=1,\ldots,n}\left|g_i\right|=o(n^{-1/2})$ in probability.
\end{corollary}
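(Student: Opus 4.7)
The plan is to bound $g_i$ using Cauchy--Schwarz together with a bound on the operator norm of $(X'X+aI)^{-1}$ and the size of the individual $x_i$. Concretely, for every $i\in\{1,\ldots,n\}$,
\begin{equation*}
  \left|g_i\right|
  =
  \left|x'_i(X'X+aI)^{-1}x_n\right|
  \le
  \left\|x_i\right\|
  \cdot
  \left\|(X'X+aI)^{-1}\right\|
  \cdot
  \left\|x_n\right\|,
\end{equation*}
so taking the maximum over $i$ reduces the problem to three independent estimates.

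First I would control the operator norm $\left\|(X'X+aI)^{-1}\right\|=1/\lambda_{\min}(X'X+aI)$. Since $X'X=\sum_{i=1}^{n-1}x_ix'_i$ and $\frac{1}{n-1}X'X\to\Sigma$ almost surely by the strong law of large numbers (whose applicability is guaranteed by (A1) and the finiteness built into (A2)), and since $\lambda_{\min}(\Sigma)>0$ by the non-singularity in (A2), continuity of eigenvalues gives
\begin{equation*}
  \lambda_{\min}(X'X+aI)\ge\lambda_{\min}(X'X)\ge\tfrac12(n-1)\lambda_{\min}(\Sigma)
\end{equation*}
from some $n$ on almost surely; hence $\left\|(X'X+aI)^{-1}\right\|=O(n^{-1})$ a.s.

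Next I would apply Lemma~\ref{lem:hin} with $\eta_i:=\left\|x_i\right\|$; the finite-second-moment hypothesis holds because $\Expect\left\|x_1\right\|^2=\mathrm{tr}(\Sigma)<\infty$ by (A2). This yields $\max_{i=1,\ldots,n}\left\|x_i\right\|=o(n^{1/2})$ a.s. Finally, $\left\|x_n\right\|=O_P(1)$ since the $x_i$ are identically distributed, so the single factor $\left\|x_n\right\|$ contributes only an $O_P(1)$ term. Multiplying these three estimates together gives, for $i<n$,
\begin{equation*}
  \max_{i<n}\left|g_i\right|
  =
  o(n^{1/2})\cdot O(n^{-1})\cdot O_P(1)
  =
  o_P(n^{-1/2}),
\end{equation*}
and for $i=n$ separately, $\left|g_n\right|\le\left\|x_n\right\|^2\cdot O(n^{-1})=O_P(n^{-1})=o_P(n^{-1/2})$. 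Combining these two estimates yields the desired $\max_{i=1,\ldots,n}\left|g_i\right|=o_P(n^{-1/2})$.

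The main conceptual point, and the only place one could lose the rate, is to notice that although $\max_{i\le n}\left\|x_i\right\|$ is $o(n^{1/2})$ rather than $O(1)$, the other factor $\left\|x_n\right\|$ is bounded in probability because $x_n$ is a single, fixed-distribution vector; replacing it by $\max_{i\le n}\left\|x_i\right\|$ would give only $o_P(1)$, which would be too weak. Aside from this, the argument is a routine assembly of a Cauchy--Schwarz bound, a minimum-eigenvalue estimate based on the SLLN, and Lemma~\ref{lem:hin}.
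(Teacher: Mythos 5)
Your argument is correct and is essentially the paper's own proof: the same Cauchy--Schwarz bound $\left|g_i\right|\le\left\|x_i\right\|\left\|(X'X+aI)^{-1}\right\|\left\|x_n\right\|$, the same eigenvalue estimate $\lambda_{\min}(X'X+aI)\ge\tfrac12(n-1)\lambda_{\min}(\Sigma)$ from some $n$ on almost surely, the same appeal to Lemma~\ref{lem:hin} for $\max_i\left\|x_i\right\|=o(n^{1/2})$, and the same observation that $\left\|x_n\right\|$ is bounded in probability. Your closing remark about why $\left\|x_n\right\|$ must be treated as a single $O_P(1)$ factor rather than folded into the maximum is exactly the point the paper relies on implicitly.
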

\begin{proof}
  Similarly to the proof of Lemma~\ref{lem:excluded},
  we have, for almost all sequences $x_1,x_2,\ldots$,
  $$
    \max_{i=1,\ldots,n}
    \left|g_i\right|
    \le
    \frac
    {
      \left\|x_n\right\|
      \max_{i=1,\ldots,n}
      \left\|x_i\right\|
    }
    {
      \lambda_{\min}(X'X+aI)
    }
    <
    2
    \frac
    {
      \left\|x_n\right\|
      \max_{i=1,\ldots,n}
      \left\|x_i\right\|
    }
    {
      (n-1)\lambda_{\min}(\Sigma)
    }
  $$
  from some $n$ on.
  It remains to combine this with Lemma~\ref{lem:hin}
  and the fact that, by Assumption~(A1),
  $\left\|x_n\right\|$ is bounded by a constant with high probability.
\end{proof}

\begin{corollary}\label{cor:reduction}
  Under the conditions of the theorem,
  $n^{1/2}\left(r_{(k_n)}-V_{(k_n)}\right)\to0$ in probability.
\end{corollary}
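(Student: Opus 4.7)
The plan is to localize the comparison of $V_{(k_n)}$ and $r_{(k_n)}$ to indices whose residuals lie in a bounded range. The starting identity is $V_i-r_i=-r_ig_i/(1+g_i)$, so once $\max_j|g_j|\le 1/2$ (which holds eventually by Corollary~\ref{cor:g_i}) one has $|V_i-r_i|\le 2|r_i|\max_j|g_j|$. The naive bound $|V_{(k_n)}-r_{(k_n)}|\le\max_i|V_i-r_i|\le 2\max_i|r_i|\max_j|g_j|$ turns out to be too weak: under (A1)--(A4) the best general bound is $\max_i|r_i|=o(n^{1/2})$ in probability (via Lemma~\ref{lem:hin}), and even with tighter Gaussian tail estimates giving $\max_i|r_i|=O(\sqrt{\log n})$ the product is only $o(n^{-1/2}\sqrt{\log n})$, so atypical residuals must be discarded.

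I would pick a sequence $c_n\to\infty$ slowly enough that $c_n\max_j|g_j|=o(n^{-1/2})$ in probability (possible by Corollary~\ref{cor:g_i}), and partition $\{1,\ldots,n-1\}$ into a middle set $M:=\{i:|r_i|\le c_n\}$ and tails $T^+:=\{i:r_i>c_n\}$ and $T^-:=\{i:r_i<-c_n\}$. Next I would establish a priori uniform bounds $|r_{(k_n)}|\le C$ and $|V_{(k_n)}|\le C$ (with high probability) for some absolute constant $C$. The bound for $r_{(k_n)}$ is standard quantile convergence, since the empirical distribution of $r_1,\ldots,r_{n-1}$ converges to $N(0,\sigma^2)$ (the ridge correction $\hat y_i-w\cdot x_i$ is $o(1)$ in probability uniformly in $i$, by bounds analogous to those in the proof of Corollary~\ref{cor:g_i}). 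For $V_{(k_n)}$ I would use the monotone coupling $r_i\le t/2\Rightarrow V_i\le t$ (valid for $t>0$ once $|g_j|\le 1/2$) to get $\#\{i:V_i\le t\}\ge\#\{i:r_i\le t/2\}$, which by the law of large numbers eventually exceeds $k_n$ for any fixed $t>2\zeta_\alpha$; a symmetric argument bounds $V_{(k_n)}$ from below.

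Once $n$ is large enough that $c_n>2C$ and $\max_j|g_j|\le 1/2$, every $i\in T^+$ has $V_i>c_n/2>V_{(k_n)}$ and every $i\in T^-$ has $V_i<-c_n/2<V_{(k_n)}$, so $V_{(k_n)}$ is attained at some $i\in M$ and equals the $(k_n-|T^-|)$-th smallest of $\{V_i\}_{i\in M}$; the identical argument applied to $r$ shows that $r_{(k_n)}$ is the $(k_n-|T^-|)$-th smallest of $\{r_i\}_{i\in M}$. Order-statistic stability under $\ell^\infty$ perturbations then yields
$$
  |V_{(k_n)}-r_{(k_n)}|
  \le\max_{i\in M}|V_i-r_i|
  \le 2c_n\max_j|g_j|
  =o(n^{-1/2})\quad\text{in probability,}
$$
which gives the corollary. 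The main obstacle is the a priori bound on $V_{(k_n)}$ in the rank-matching step, which must be proved without circularly invoking the corollary itself; the self-contained coupling $r_i\le t/2\Rightarrow V_i\le t$ is what makes this feasible, because it replaces any refined quantile asymptotic for $V$ with a crude but sufficient stochastic domination by the empirical CDF of the $r_i$'s.
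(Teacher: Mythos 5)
Your proposal is correct, but it takes a genuinely different route from the paper. The paper argues by contradiction directly from the defining property of the order statistic: if, say, $V_{(k_n)}<r_{(k_n)}-\epsilon n^{-1/2}$, then at least $k_n$ indices satisfy $V_i<r_{(k_n)}-\epsilon n^{-1/2}$, hence (multiplying by $1+g_i$) $r_i<r_{(k_n)}-\epsilon n^{-1/2}+g_i r_{(k_n)}$; since $r_{(k_n)}$ is bounded with high probability (via Lemma~\ref{lem:Carroll}) and $\max_i|g_i|=o(n^{-1/2})$ (Corollary~\ref{cor:g_i}), the last term is eventually below $\epsilon n^{-1/2}$, forcing at least $k_n$ indices with $r_i<r_{(k_n)}$ --- impossible. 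This automatically ``localizes'' the comparison at the bounded value $r_{(k_n)}$ and so never needs a bound on $\max_i|r_i|$, a truncation, or any a priori control of $V_{(k_n)}$. Your route --- splitting off the tails $T^\pm$, matching the rank $k_n-|T^-|$ within the middle set $M$, and invoking the $1$-Lipschitz dependence of an order statistic on the underlying vector in $\ell^\infty$ --- is sound: the rank-matching works because the same set $T^-$ sits strictly below both order statistics and $T^+$ strictly above, and your coupling $r_i\le t/2\Rightarrow V_i\le t$ does deliver the needed a priori bound on $V_{(k_n)}$ without circularity. You correctly identified that the naive bound $\max_i|V_i-r_i|$ fails, which is exactly the difficulty the paper's counting argument sidesteps. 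Two minor remarks: since the a priori bound is a fixed constant $C$, you may simply take $c_n\equiv 3C$ rather than a slowly diverging sequence, which removes the need to construct $c_n$ at all; and your parenthetical claim that the empirical distribution of the $r_i$ converges should be phrased via $\max_i\lvert x_i'(\hat w_n-w)\rvert\le\max_i\lVert x_i\rVert\cdot\lVert\hat w_n-w\rVert=o(n^{1/2})\,O_P(n^{-1/2})=o_P(1)$, combining Lemma~\ref{lem:hin} with (\ref{eq:Delta})--(\ref{eq:Omega}), which indeed parallels the proof of Corollary~\ref{cor:g_i} as you suggest. Overall your argument is longer but more mechanical; the paper's is shorter but relies on the slightly slick step of feeding the unknown quantile $r_{(k_n)}$ back into its own defining inequality.
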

\begin{proof}
  Suppose that, on the contrary, there are $\epsilon>0$ and $\delta>0$
  such that $n^{1/2}\left|r_{(k_n)}-V_{(k_n)}\right|>\epsilon$
  with probability at least $\delta$ for infinitely many $n$.
  Fix such $\epsilon$ and $\delta$.
  Suppose, for concreteness,
  that, with probability at least $\delta$ for infinitely many $n$,
  we have $n^{1/2}\left(r_{(k_n)}-V_{(k_n)}\right)>\epsilon$,
  i.e.,
  $V_{(k_n)}<r_{(k_n)}-\epsilon n^{-1/2}$.
  The last inequality implies that $V_i<r_{(k_n)}-\epsilon n^{-1/2}$
  for at least $k_n$ values of $i$.
  By the definition (\ref{eq:V}) of $V_i$
  this in turn implies that
  $r_i<r_{(k_n)}-\epsilon n^{-1/2}+g_i r_{(k_n)}$
  for at least $k_n$ values of $i$.
  By Corollary~\ref{cor:g_i}, however, the last addend is less than $\epsilon n^{-1/2}$
  with probability at least $1-\delta$ from some $n$ on
  (the fact that $r_{(k_n)}$ is bounded with high probability follows, e.g.,
  from Lemma~\ref{lem:Carroll} below).
  This implies $r_{(k_n)}<r_{(k_n)}$ with positive probability from some $n$ on,
  and this contradiction completes the proof.
  \ifFULL\bluebegin
    
  Let us now consider the opposite case:
  with probability at least $\delta$ for infinitely many $n$,
  $n^{1/2}\left(V_{(k_n)}-r_{(k_n)}\right)>\epsilon$,
  i.e.,
  $V_{(k_n)}>r_{(k_n)}+\epsilon n^{-1/2}$,
  i.e., $V_i\le r_{(k_n)}+\epsilon n^{-1/2}$
  for less than $k_n$ values of $i$,
  i.e.,
  $r_i\le r_{(k_n)}+g_ir_{k_n}+\epsilon n^{-1/2}+g_i\epsilon n^{-1/2}$
  for less than $k_n$ values of $i$,
  i.e.,
  $r_i\le r_{(k_n)}$
  for less than $k_n$ values of $i$,
  which is a contradiction.
  \blueend\fi
\end{proof}

The last (and most important) component of the proof
is the following version of the theorem in \citet{carroll:1978},
itself a version of the famous Bahadur representation theorem \citep{bahadur:1966}.

\begin{lemma}[\citealt{carroll:1978}, theorem]\label{lem:Carroll}
  Under the conditions of Theorem~\ref{thm:main},
  \begin{equation}\label{eq:Carroll}
    n^{1/2}
    \left|
      \left(r_{(k_n)} - \zeta_{\alpha}\right)
      -
      \frac{\alpha - F_n(\zeta_{\alpha})}{f(\zeta_{\alpha})}
      +
      \mu'
      (\hat w_n - w)
    \right|
    \to
    0
    \qquad
    \text{a.s.},
  \end{equation}
  where $F_n$ is the empirical distribution function of the noise $\xi_1,\ldots,\xi_{n-1}$
  and $\hat w_n:=(X'X+aI)^{-1}X'Y$ is the ridge regression estimate of $w$.
\end{lemma}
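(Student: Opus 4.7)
My strategy is to bootstrap from the classical Bahadur--Kiefer representation for IID samples, treating the ridge regression residuals $r_i = \xi_i - \delta_n\cdot x_i$ (where $\delta_n:=\hat w_n-w$) as perturbations of the unobservable noise $\xi_i$. The term $\mu'\delta_n$ in~(\ref{eq:Carroll}) then emerges as the first-order effect of this perturbation on the $\alpha$-quantile of the residuals.

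Two preliminary ingredients. First, \emph{strong $n^{-1/2}$-consistency} of the ridge estimator: from $\hat w_n-w=(X'X+aI)^{-1}(X'\xi-aw)$, assumption~(A2) and the strong law give $(X'X+aI)/n\to\Sigma$ a.s., and the law of the iterated logarithm applied coordinatewise to $X'\xi$ yields $\|\delta_n\|=O((\log\log n/n)^{1/2})$ a.s. Second, the \emph{classical Bahadur--Kiefer representation} applied to the IID $N(0,\sigma^2)$ sequence $\xi_1,\xi_2,\ldots$, which gives
$$
  \xi_{(k_n)}-\zeta_\alpha
  =\frac{\alpha-F_n(\zeta_\alpha)}{f(\zeta_\alpha)}
  +O\bigl(n^{-3/4}(\log n)^{1/2}(\log\log n)^{1/4}\bigr)
  \qquad\text{a.s.}
$$
Granted both ingredients, establishing~(\ref{eq:Carroll}) reduces to the perturbation claim
$$
  n^{1/2}\bigl(r_{(k_n)}-\xi_{(k_n)}+\mu'\delta_n\bigr)\to 0\qquad\text{a.s.}
$$

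The perturbation claim is the main obstacle, and I would prove it by comparing the empirical CDF of the residuals $G_n(t):=\frac{1}{n-1}\sum_{i=1}^{n-1}\mathbf{1}(r_i\le t)$ with the shifted noise CDF $F_n(t+\mu'\delta_n)$. Since $\mathbf{1}(r_i\le t)=\mathbf{1}(\xi_i\le t+\delta_n\cdot x_i)$, conditioning on the objects and on $\delta_n$ gives, for $t$ near $\zeta_\alpha$,
$$
  F(t+\delta_n\cdot x_i)-F(t+\mu'\delta_n)
  =f(t)\,\delta_n\cdot(x_i-\mu)+O(\|\delta_n\|^2),
$$
whose average over $i$ is $o(n^{-1/2})$ a.s., by the strong law applied to $\bar x_n:=\frac{1}{n-1}\sum_{i<n}x_i$ combined with the iterated-logarithm rate for $\|\delta_n\|$. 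The centered stochastic fluctuation is controlled by a modulus-of-continuity estimate for the empirical process indexed by the VC class of half-spaces $\{(\xi,x):\xi\le t+c\cdot x\}$, evaluated over a shrinking neighborhood of $(\zeta_\alpha,0)$ of radius $O(\|\delta_n\|)$; standard strong-approximation tools (Koml\'os--Major--Tusn\'ady-type bounds, or the Stute-style oscillation inequalities used by Carroll) give fluctuation $o(n^{-1/2})$ a.s.

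Finally, inverting $G_n$ at the level $k_n/(n-1)=\alpha+O(n^{-1})$ and using near-linearity of $F_n$ at $\zeta_\alpha$ (itself a consequence of the Bahadur--Kiefer ingredient) converts the uniform control of $G_n(t)-F_n(t+\mu'\delta_n)$ into the required bound on $r_{(k_n)}-\xi_{(k_n)}+\mu'\delta_n$; substituting the Bahadur--Kiefer expansion for $\xi_{(k_n)}$ then yields~(\ref{eq:Carroll}). The delicate point throughout is that the required remainder is $o(n^{-1/2})$ in the almost-sure sense, so every step must be accompanied by a rate estimate (via LIL, VC oscillation, or strong approximation) rather than a bare consistency statement.
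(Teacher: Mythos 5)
Your proof is correct in outline and arrives at the same representation, but it is organized quite differently from the paper's. The paper (Appendix~\ref{app:proof-of-lemma}) does not invoke the classical Bahadur--Kiefer theorem for $\xi_{(k_n)}$ as a black box: following Carroll and Bahadur, it reproves everything directly for the residuals, introducing the process $W_n(s,t)$ indexed by the perturbations $\zeta_\alpha+a_ns+a_ntx_i$ with $a_n=n^{-1/2}\ln n$, controlling its oscillation by discretization over a grid of $\sim\ln^2n$ points plus Bernstein's inequality, localizing $r_{(k_n)}$ in $[\zeta_\alpha-a_n,\zeta_\alpha+a_n]$ by Hoeffding's inequality, and then Taylor-expanding; it also restricts to $p=1$ ``following Carroll''. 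You instead take the Bahadur--Kiefer representation of $\xi_{(k_n)}$ (with Kiefer's a.s.\ rate) as given and reduce the lemma to the perturbation claim $n^{1/2}\bigl(r_{(k_n)}-\xi_{(k_n)}+\mu'\delta_n\bigr)\to0$ a.s., which you handle with a VC-class/strong-approximation oscillation bound. Your route is more modular, works for general $p$ without extra effort, and makes the origin of the $\mu'(\hat w_n-w)$ term transparent; the paper's route is more elementary and self-contained (only Bernstein, Hoeffding and Taylor). The two oscillation estimates are the same object in different clothing: your empirical process over the half-spaces $\{(\xi,x):\xi\le t+c\cdot x\}$ near $(\zeta_\alpha,0)$ is exactly the paper's $W_n$.

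Two small repairs. First, where you average $f(t)\,\delta_n\cdot(x_i-\mu)$ over $i$, the strong law for $\bar x_n$ is not enough: it gives only $o(1)\cdot O\bigl((\log\log n/n)^{1/2}\bigr)$, which need not be $o(n^{-1/2})$. You need a rate for $\bar x_n-\mu$ as well, e.g.\ the LIL bound $\bar x_n-\mu=O\bigl((\log\log n/n)^{1/2}\bigr)$ a.s.\ (available since (A2) gives finite second moments), which makes the product $O(\log\log n/n)=o(n^{-1/2})$; this is exactly the bound $O(n^{-1}a_n(n\ln\ln n)^{1/2})$ used at the corresponding step of the paper's proof. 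Second, before the neighbourhood oscillation bound can be applied at $t=r_{(k_n)}$ in your inversion step, you need an explicit a.s.\ localization of $r_{(k_n)}$ near $\zeta_\alpha$ (the paper's Lemma~\ref{lem:2}, proved via Hoeffding); your plan accommodates this but should state it as a separate step.
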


For details of the proof (under our assumptions),
see Appendix~\ref{app:proof-of-lemma}.

By (\ref{eq:B-C}), Corollary~\ref{cor:g_i},
and Slutsky's lemma (see, e.g., \citealt{vandervaart:1998}, Lemma~2.8),
it suffices to prove (\ref{eq:to-prove})
with the left-hand side replaced by
$n^{1/2}(V_{(k_n)} - z_{\epsilon/2}\sigma)$.
Moreover,
by Corollary~\ref{cor:reduction} and Slutsky's lemma,
it suffices to prove~(\ref{eq:to-prove})
with the left-hand side replaced by
$n^{1/2}(r_{(k_n)} - z_{\epsilon/2}\sigma)$;
this is what we will do.

Lemma~\ref{lem:Carroll} holds in the situation
where $w$ is a constant vector
(the distribution of $w$ is allowed to be degenerate).
Let $R$ be a Borel set in $(\mathbb{R}^p)^{\infty}$
such that (\ref{eq:Carroll}) holds for all $(x_1,x_2,\ldots)\in R$,
where the ``a.s.''\ is now interpreted
as ``for almost all sequences $(\xi_1,\xi_2,\ldots)$''.
By Lebesgue's dominated convergence theorem,
it suffices to prove~(\ref{eq:to-prove})
with the left-hand side replaced by
$n^{1/2}(r_{(k_n)} - z_{\epsilon/2}\sigma)$
for a fixed $w$ and a fixed sequence $(x_1,x_2,\ldots)\in R$.
Therefore, we fix $w$ and $(x_1,x_2,\ldots)\in R$;
the only remaining source of randomness is $(\xi_1,\xi_2,\ldots)$.
Finally,
by the definition of the set $R$,
it suffices to prove (\ref{eq:to-prove})
with the left-hand side replaced by
\begin{equation}\label{eq:to-analyze}
  n^{1/2}
  \frac{\alpha - F_n(\zeta_{\alpha})}{f(\zeta_{\alpha})}
  -
  n^{1/2}
  \mu'
  (\hat w_n - w).
\end{equation}
Without loss of generality we will assume
that $\frac1nX'_nX_n\to\Sigma$ as $n\to\infty$
(this extra assumption about $R$ will ensure that Lindeberg's condition is satisfied below).

Since $\Expect(\alpha - F_n(\zeta_{\alpha})) = 0$ and
$$
  \var\left(\alpha-F_n(\zeta_{\alpha})\right)
  =
  \frac{F(\zeta_{\alpha})(1-F(\zeta_{\alpha}))}{n-1} = \frac{\alpha(1-\alpha)}{n-1},
$$
where $F$ is the distribution function of $N(0,\sigma^2)$,
we have
\[
  n^{1/2}
  \frac{\alpha-F_n(\zeta_{\alpha})}{f(\zeta_{\alpha})}
  \law
  N
  \left(
    0,\frac{\alpha(1-\alpha)}{f^2(\zeta_{\alpha})}
  \right)
  \quad
  (n\to\infty)
\]
by the central limit theorem (in its simplest form).

Since $\hat w_n=(X'X+aI)^{-1}X'Y$ is the ridge regression estimate,
\begin{align}
  \Expect(\hat w_n - w)
  &=
  -a(X'X+aI)^{-1}w
  =:
  \Delta_n,
  \label{eq:Delta}\\
  \var(\hat w_n)
  &=
  \sigma^2
  (X'X+aI)^{-1}X'X(X'X+aI)^{-1}
  =:
  \Omega_n.
  \label{eq:Omega}
\end{align}
Furthermore, for $n\to\infty$
\begin{align*}
  n^{1/2}\Delta_n
  &=
  -n^{-1/2}a\left(\frac{X'X}{n}+\frac{aI}{n}\right)^{-1}w
  \sim
  -n^{-1/2}a\Sigma^{-1}w
  \to
  0,\\
  n\Omega_n
  &=
  \sigma^2
  \left(\frac{X'X}{n}+\frac{aI}{n}\right)^{-1}
  \frac{X'X}{n}
  \left(\frac{X'X}{n}+\frac{aI}{n}\right)^{-1}
  \to
  \sigma^2\Sigma^{-1}.
\end{align*}
This gives
\[
  n^{1/2}\mu'(\hat w_n-w)
  \law
  N
  \left(
    0,\sigma^2\mu'\Sigma^{-1}\mu
  \right)
  \quad
  (n\to\infty)
\]
(the asymptotic, and even exact,
normality is obvious from the formula for $\hat w_n$).

Let us now calculate the covariance
between the two addends in \eqref{eq:to-analyze}:
\begin{align*}       
  \cov &      
  \left(
    n^{1/2}\frac{\alpha-F_n(\zeta_{\alpha})}{f(\zeta_{\alpha})},
    -n^{1/2}\mu'(\hat w_n-w)
  \right)\\&   
  =
  \frac{n}{f(\zeta_{\alpha})}
  \cov
  \left(
    F_n(\zeta_{\alpha})-\alpha,
    \mu'(\hat w_n - w)
  \right)\\
  &=         
  \frac{n}{(n-1)f(\zeta_{\alpha})}
  \sum_{i=1}^{n-1}
  \cov
  \left(
    1_{\{\xi_i\le\zeta_{\alpha}\}}-\alpha,
    \mu'(\hat w_n - w)
  \right)\\
  &=         
  \frac{n}{(n-1)f(\zeta_{\alpha})}
  \sum_{i=1}^{n-1}
  \Expect
  \Bigl(
    \left(
      1_{\{\xi_i\le\zeta_{\alpha}\}}-\alpha
    \right)
    \mu'(X'X+aI)^{-1}X'\xi
  \Bigr),
\end{align*}         
where $\xi = (\xi_1,\ldots,\xi_{n-1})'$ and the last equality uses the decomposition
$
  \hat w_n - w
  =
  \Delta_n + (X'X+aI)^{-1}X'\xi
$
with the second addend having zero expected value.
Since
\begin{equation*}
  \Expect
  1_{\{\xi_i\le\zeta_{\alpha}\}}
  \mu'(X'X+aI)^{-1}X'\xi
  =
  \sum_{j=1}^{n-1}
  \Expect
  1_{\{\xi_i\le\zeta_{\alpha}\}}
  A_j\xi_j
  =
  \mu_{\alpha} A_i, 
\end{equation*}
where $A_j:=\mu'(X'X+aI)^{-1}x_j$, $j = 1,\ldots,n-1$,
$
  \mu_{\alpha}
  :=
  \Expect
  1_{\{\xi_i\le\zeta_{\alpha}\}}\xi_i
  =
  \int_{-\infty}^{\zeta_{\alpha}}xf(x)dx
$.
An easy computation gives
$\mu_{\alpha}=-\sigma^2f(\zeta_{\alpha})$,
and so we have
\begin{multline*}
  \cov
  \left(
    n^{1/2}
    \frac{\alpha-F_n(\zeta_{\alpha})}{f(\zeta_{\alpha})},
    -n^{1/2}
    \mu'(\hat w_n-w)
  \right)
  =
  \frac{n}{(n-1)f(\zeta_{\alpha})}
  \sum_{i=1}^{n-1}
  \mu_{\alpha} A_i\\
  =
  - \sigma^2 \frac{n}{(n-1)}
  \sum_{i=1}^{n-1}
  A_i
  =
  -\sigma^2
  \mu'
  \left(
    \frac1n X'X+\frac{a}{n}I
  \right)^{-1}
  \bar{x}
  \to
  -\sigma^2
  \mu'\Sigma^{-1}\mu
\end{multline*}
as $n\to\infty$,
where $\bar{x}$ is the arithmetic mean of $x_1,\ldots,x_{n-1}$.
Finally, this implies that (\ref{eq:to-analyze}) converges in law to
\begin{equation*}
  N
  \left(
    0,
    \frac{\alpha(1-\alpha)}{f^2(\zeta_{\alpha})}
    +
    \sigma^2
    \mu'\Sigma^{-1}\mu
    -
    2\sigma^2
    \mu'\Sigma^{-1}\mu
  \right)
  =
  N
  \left(
    0,
    \frac{\alpha(1-\alpha)}{f^2(\zeta_{\alpha})}
    -
    \sigma^2
    \mu'\Sigma^{-1}\mu
  \right);
\end{equation*}
the asymptotic normality of~(\ref{eq:to-analyze}) follows from the central limit theorem 
with Lindeberg's condition,
which holds since (\ref{eq:to-analyze})
is a linear combination of the noise random variables
$\xi_1,\ldots,\xi_{n-1}$
with coefficients whose maximum is $o(1)$ as $n\to\infty$
(this uses the assumption $\frac1nX'_nX_n\to\Sigma$
made earlier).

\ifFULL\bluebegin
  There are two reasons for the asymptotic normality:
  \begin{itemize}
  \item
    The random variables $c_i1_{\{\xi_i\le\zeta_{\alpha}\}}$ composing $F_n(\zeta_{\alpha})$
    in~(\ref{eq:to-analyze})
    are extremely small.
  \item
    The other random variables $c_i\xi_i$
    in~(\ref{eq:to-analyze}) are exactly normal.
  \end{itemize}
  Perhaps we could use Zolotarev-type results
  to show that the distribution of~(\ref{eq:to-analyze})
  is extremely close to Gaussian.

  Instead, we completely ignore the second reason
  and use that the random variables of both types
  in~(\ref{eq:to-analyze}) are small.
  In the case $p=1$, the coefficients in front of $\xi_i$,
  $i=1,\ldots,n-1$,
  are close to $n^{-1/2}x_i$,
  and since $\left|x_i\right|=o(n^{1/2})$
  (cf.\ Lemma~\ref{lem:hin}),
  the maximum of the coefficients is $o(1)$,
  and so Lindeberg's condition indeed holds.
  In the case of general $p$, the coefficients are close to
  $n^{-1/2}\mu'\Sigma^{-1}x_i$,
  and so Lindeberg's condition also holds.
\blueend\fi

A more intuitive (but not necessarily simpler)
proof can be obtained by noticing that $\hat w_n-w$ and the residuals
are asymptotically (precisely when $a=0$) independent.

\ifFULL\bluebegin
  \begin{remark}
    The assumptions we can use in the proof:
    \begin{itemize}
    \item
      The observations $(x_i,y_i)$ are iid (the VC-type assumption)
      and the $x_i$ have a finite second moment;
      this implies $\frac1n X'X\to\Sigma$ for some matrix $\Sigma$.
    \item
      Assume directly that $\frac1n X'X\to\Sigma$ for some matrix $\Sigma$
      and further assume that $\Sigma$ is non-singular
      (this is used in \citealt{dasgupta:2008}, Theorem~7.5 on p.~97;
      this assumption is very standard).
      Disadvantage of this assumption: no rates of convergence.
      But if this is ignored,
      $$
        (X'X+aI)^{-1} X'X
        \approx
        (n\Sigma+aI)^{-1} n\Sigma
        =
        \left(
          \Sigma+\frac{a}{n}I
        \right)^{-1}
        \Sigma
        \approx
        I;
      $$
      the penultimate matrix has the same eigenvectors as $\Sigma$
      and eigenvalues $\lambda/(\lambda+a/n)\to1$
      when $\Sigma$ is non-singular.
    \end{itemize}
  \end{remark}
\blueend\fi

\section{Conclusion}

The results of this paper are asymptotic;
it would be very interesting to obtain their non-asymptotic counterparts.
In non-asymptotic settings, however, it is not always true
that conformalized ridge regression loses little in efficiency
as compared with the Bayesian prediction interval;
this is illustrated in \citet{vovk/etal:2005book}, Section~8.5,
and illustrated and explained in \citet{vovk/etal:2009AOS}.
The main difference is that CRR and Bayesian predictor
start producing informative predictions after seeing a different number of observations.
CRR, like any other conformal predictor
(or any other method whose validity depends only on the IID assumption),
starts producing informative predictions only after the number of observations exceeds
the inverse significance level $1/\epsilon$.
After this theoretical lower bound is exceeded, however,
the difference between CRR and Bayesian predictions quickly becomes very small.

Another interesting direction of further research is to extend
our results to kernel ridge regression.
\ifFULL\bluebegin
  What are the right assumptions?
\blueend\fi

\ifnotCONF
  \subsection*{Acknowledgements}

  We are grateful
  to Albert Shiryaev for inviting us in September 2013
  to Kolmogorov's dacha in Komarovka,
  where this project was conceived,
  and to Glenn Shafer for his advice about terminology.
  This work was supported in part by EPSRC (grant EP/K033344/1).
\fi

\appendix
\section{Various computations}
\label{app:computations}

For the reader's convenience,
this appendix provides details of various routine calculations.

\subsection*{A singular $C$ in (\ref{eq:second-term})}

Apply~(\ref{eq:second-term}) to
$\Sigma_{\epsilon}:=\Sigma+\epsilon I$ and $C_{\epsilon}:=C+\epsilon I$, where $\epsilon>0$,
in place of $\Sigma$ and $C$, respectively,
and let $\epsilon\to0$.

\subsection*{Computing $t_i$ for simplified upper CRR}

In addition to the notation $X$ for the design matrix $X_{n-1}$
based on the first $n-1$ observations,
we will use the notation $H$ for the hat matrix $X(X'X+aI)^{-1}X'$
based on the first $n-1$ observations
and $\bar H$ for the hat matrix $X_n(X'_nX_n+aI)^{-1}X'_n$
based on the first $n$ observations;
the elements of $H$ will be denoted as $h_{i,j}$
and the elements of $\bar H$ as $\bar h_{i,j}$;
as always, $h_i$ stands for the diagonal element $h_{i,i}$.
To compute $t_i$ we will use the formulas~(2.18) in \citet{chatterjee/hadi:1988}.

  Since $B$ is the last column of $I_n-H_n$ and
  $$
    \bar h_{n,n}
    =
    \frac{x'_n(X'X+aI)^{-1}x_n}{1+x'_n(X'X+aI)^{-1}x_n},
  $$
  we have
  \begin{align*}
    b_n
    &=
    1 - \frac{x'_n(X'X+aI)^{-1}x_n}{1+x'_n(X'X+aI)^{-1}x_n},\\
    b_i
    &=
    \frac{-x'_n(X'X+aI)^{-1}x_i}{1+x'_n(X'X+aI)^{-1}x_n}.
  \end{align*}
  Therefore,
  $$
    b_n-b_i
    =
    \frac{1+x'_n(X'X+aI)^{-1}x_i}{1+x'_n(X'X+aI)^{-1}x_n}.
  $$
  Next, letting $\hat y$ stand for the predictions
  computed from the first $n-1$ observations,
  \begin{align*}
    a_i
    &=
    \sum_{j=1,\ldots,n-1:j\ne i}
    (-\bar h_{i,j}y_j)
    +
    (1-\bar h_{i,i})y_i\notag\\
    &=
    y_i
    -
    \sum_{j=1}^{n-1}
    \bar h_{i,j}y_j\\ 
    &=
    y_i
    -
    \sum_{j=1}^{n-1}
    h_{i,j}y_j
    +
    \sum_{j=1}^{n-1}
    \frac{x'_i(X'X+aI)^{-1}x_n x'_n(X'X+aI)^{-1}x_j}{1+x'_n(X'X+aI)^{-1}x_n}
    y_j\notag\\
    &=
    y_i
    -
    \hat y_i
    +
    \frac{x'_i(X'X+aI)^{-1}x_n x'_n(X'X+aI)^{-1}X'Y}{1+x'_n(X'X+aI)^{-1}x_n}\notag\\
    &=
    y_i
    -
    \hat y_i
    +
    \frac{x'_i(X'X+aI)^{-1}x_n \hat y_n}{1+x'_n(X'X+aI)^{-1}x_n}\notag
  \end{align*}
  for $i<n$, and
  \begin{align*}
    a_n
    &=
    \sum_{j<n}
    (-\bar h_{n,j}y_j)
    =
    -\sum_{j=1}^{n-1}
    \frac{x'_j(X'X+aI)^{-1}x_n}{1+x'_n(X'X+aI)^{-1}x_n}
    y_j\\
    &=
    -\frac{Y'X(X'X+aI)^{-1}x_n}{1+x'_n(X'X+aI)^{-1}x_n}.
  \end{align*}
  Therefore,
  \begin{equation*}
    a_i-a_n
    =
    y_i - \hat y_i
    +
    \frac{1+x'_i(X'X+aI)^{-1}x_n}{1+x'_n(X'X+aI)^{-1}x_n}
    \hat y_n.
  \end{equation*}
  This gives
  $$
    t_i
    =
    (y_i - \hat y_i)
    \frac{1+x'_n(X'X+aI)^{-1}x_n}{1+x'_i(X'X+aI)^{-1}x_n}
    +
    \hat y_n,
  $$
  i.e., (\ref{eq:t}).

\subsection*{Expressing $\mu_{\alpha}$ via $\zeta_{\alpha}$}

First we use the substitution $y:=x^2/2\sigma^2$ to obtain
\begin{equation}\label{eq:1}
  \frac{1}{\sqrt{2\pi}\sigma}
  \int_0^{\zeta_{\alpha}}
  e^{-x^2/2\sigma^2}
  x dx
  =
  \frac{\sigma}{\sqrt{2\pi}}
  \int_0^{\zeta_{\alpha}^2/2\sigma^2}
  e^{y}
  dy
  =
  \frac{\sigma}{\sqrt{2\pi}}
  \left(
    1 - e^{-\zeta_{\alpha}^2/2\sigma^2}
  \right).
\end{equation}
Replacing $\zeta_{\alpha}$ by $\infty$,
\begin{equation}\label{eq:2}
  \frac{1}{\sqrt{2\pi}\sigma}
  \int_0^{\infty}
  e^{-x^2/2\sigma^2}
  x dx
  =
  \frac{\sigma}{\sqrt{2\pi}}.
\end{equation}
Finally, subtracting (\ref{eq:2}) from (\ref{eq:1}) gives
\begin{equation*}
  \mu_{\alpha}
  =
  \frac{1}{\sqrt{2\pi}\sigma}
  \int_{-\infty}^{\zeta_{\alpha}}
  e^{-x^2/2\sigma^2}
  x dx
  =
  -\frac{\sigma}{\sqrt{2\pi}}
  e^{-\zeta_{\alpha}^2/2\sigma^2}
  =
  -\sigma^2
  f(\zeta_{\alpha}).
\end{equation*}

\ifFULL\bluebegin
  \section{Distribution of quantiles of residuals}

  The main source for this is Carroll's (\citeyear{carroll:1978}) technical report,
  which is modelled on \citet{bahadur:1966},
  where the reader can finds the details,
  correct versions of some of the formulas in \citet{carroll:1978},
  and the reference \citet{hoeffding:1963} used in but missing from \citet{carroll:1978}
  (in fact, Bahadur gives a precise statement of the required version
  of Bernstein's inequality).
  The only theorem in the technical report is (p.~2):
  \begin{theorem}
    Under conditions (A1)--(A4),
    $$
      n^{1/2}
      \left|
        (V_n-\xi)
        -
        (k_n/n-F_n(\xi))/f(\xi)
        +
        x'_0(\beta_n-\beta)
      \right|
      \to
      0
      \quad\text{(a.s.)}
    $$
  \end{theorem}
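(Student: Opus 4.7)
The proof would adapt Bahadur's classical argument for sample quantiles, whose essential ingredient was a Hoeffding/Bernstein-type control on the fluctuations of the empirical distribution function near the target quantile. Here the complication is that we work with residuals $r_i = y_i - \hat y_i = \xi_i - x_i'(\beta_n - \beta)$ rather than the noise $\xi_i$ directly, so the argument splits into (i) a classical Bahadur step applied to the noise quantile, and (ii) a ``residual correction'' step that absorbs the regression estimation error into the linear term $x_0'(\beta_n - \beta)$.

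I would begin by recording two preliminary facts that follow from (A1)--(A4): the estimator satisfies $\|\beta_n - \beta\| = O((\log n / n)^{1/2})$ a.s.\ (with the standard linear-in-$\xi$ representation up to a bias $O(1/n)$ from the ridge penalty), and by the SLLN $\bar x \to x_0$ a.s. Introduce the empirical cdf of residuals $\hat G_n(t) := (n-1)^{-1} \sum_{i<n} 1\{r_i \le t\}$, so that the order statistic $V_n$ is determined, to within $O(1/n)$, by $\hat G_n(V_n) \approx k_n/n$. Using the identity $1\{r_i \le t\} = 1\{\xi_i \le t + x_i'(\beta_n - \beta)\}$ one has
$$\hat G_n(t) - F_n(t) = \frac{1}{n-1} \sum_{i<n} \Bigl[ 1\{\xi_i \le t + x_i'(\beta_n - \beta)\} - 1\{\xi_i \le t\} \Bigr],$$
whose conditional mean (given the $x_i$ and $\beta_n$) is, by first-order Taylor expansion of $F$, equal to $f(t)\,\bar x'(\beta_n - \beta)$ plus an $O(\|\beta_n - \beta\|^2)$ remainder that is $o(n^{-1/2})$ a.s.

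The heart of the argument is the uniform approximation
$$\sup_{|t - \xi| \le n^{-1/2}\log n} n^{1/2} \Bigl| \hat G_n(t) - F_n(t) - f(t)\,\bar x'(\beta_n - \beta) \Bigr| \to 0 \quad \text{a.s.}$$
To prove it, condition on the high-probability event $\{\|\beta_n - \beta\| \le C(\log n / n)^{1/2}\}$ (whose complement has summable probability), thereby reducing to a deterministic shift; apply Bernstein's inequality to the centred Bernoulli increments, whose variance is of order $(\log n / n)^{1/2}$; discretize $t$ on a grid of spacing $1/n$; take a union bound; and close with a Borel--Cantelli argument along a dyadic subsequence $n \in \{2^k\}$, using monotonicity in $t$ to interpolate.

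With the uniform approximation in hand the proof finishes by routine manipulations. Bahadur's classical representation applied to the iid noise $\xi_1,\ldots,\xi_{n-1}$ yields $\xi_{(k_n)} - \xi = (k_n/n - F_n(\xi))/f(\xi) + o(n^{-1/2})$ a.s. Combining $\hat G_n(V_n) = k_n/n + O(1/n)$, the uniform approximation above at $t = V_n$, and $V_n \to \xi$ a.s.\ (Glivenko--Cantelli plus the uniform bound, justifying replacement of $f(V_n)$ by $f(\xi)$), one inverts to obtain
$$V_n - \xi = \frac{k_n/n - F_n(\xi)}{f(\xi)} - \bar x'(\beta_n - \beta) + o(n^{-1/2}) \quad \text{a.s.}$$
Finally, $(\bar x - x_0)'(\beta_n - \beta) = o(n^{-1/2})$ a.s.\ (since $\bar x - x_0 = o(1)$ and $\beta_n - \beta = O(n^{-1/2}\sqrt{\log n})$), so $\bar x$ may be replaced by $x_0$, giving the claim. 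The main obstacle is the uniform concentration step: securing an almost-sure (not merely in-probability) $o(n^{-1/2})$ rate for the empirical process of shifted indicators when the shift $x_i'(\beta_n - \beta)$ is itself data-dependent. The standard remedy --- conditioning on a high-probability set for $\beta_n$, applying Bernstein uniformly over a finite grid for $t$, and Borel--Cantelli along a dyadic subsequence --- is what makes the argument work.
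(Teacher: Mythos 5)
Your proposal follows essentially the same route as the paper's proof in Appendix~B: a Bahadur-type argument with a Bernstein/Hoeffding concentration bound for the shifted-indicator empirical process (your uniform approximation plays the role of the paper's two-parameter process $W_n(s,t)$ and Lemma~\ref{lem:1}), discretization plus Borel--Cantelli, Taylor expansion of $F$, and replacement of $\bar x$ by $x_0$ at the end. The only step you leave under-argued is the localization of $V_n$ into the window $|t-\xi|\le n^{-1/2}\log n$, for which almost-sure consistency of $V_n$ is not enough --- you need the quantitative rate supplied by the paper's Lemma~\ref{lem:2} via Hoeffding's inequality, but this is the same kind of estimate you already invoke elsewhere, so the approach is sound.
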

  The statement of the theorem in the paper is OK.
  The proof (Section~2) consists of the preamble,
  proof of Lemma~1 (analogous to Bahadur's Lemma~1),
  proof of Lemma~2 (analogous to Bahadur's Lemma~2),
  and proof of the theorem.

  \subsection*{Preamble}

  The definitions of $G_n$ and $W_n$ should be modified as follows:
  $$
    G_n(x)
    =
    n^{-1}
    \sum_{i=1}^n
    \bigl\{
      I(r_i\le x)
      -
      I(y_i\le\xi)
      -
      \left(
        F(x+x_i(\beta_n-\beta))
        -
        F(\xi)
      \right)
    \bigr\}
  $$
  and
  \begin{multline*}
    W_n(s,t)
    =
    n^{-1/2}
    \sum_{i=1}^n
    \bigl\{
      I(y_i\le\xi+a_ns+a_ntx_i)
      -
      I(y_i\le\xi)\\
      -
      F(\xi+a_ns+a_ntx_i)
      +
      F(\xi)
    \bigr\}.
  \end{multline*}

  \subsection*{Proof of the theorem}

  The part ``Proof of the Theorem'' on p.~4 should be modified as follows:

  \medskip

  \noindent
  \textbf{Proof of the Theorem}
  Since $E_n(V_n)=k_n/n$,
  Lemmas~1 and~2 may be applied to show that
  \begin{equation}\label{eq:Carroll1}
    n^{1/2}
    \left|
      F(V_n) - F(\xi) - (k_n/n - E_n(\xi))
    \right|
    \to
    0
    \quad
    \text{(a.s.)}.
  \end{equation}
  The statement of the theorem follows from plugging
  $$
    F(V_n) - F(\xi)
    =
    (V_n-\xi) f(\xi)
    +
    o(n^{-1/2})
    \quad
    \text{(a.s.)}
  $$
  (which follows from the second order Taylor expansion and Lemma~2)
  and
  \begin{multline*}      
    E_n(\xi)
    =
    F_n(\xi)
    +
    W_n(0,a_n^{-1}(\beta_n-\beta))\\    
    +
    n^{-1}
    \sum_{i=1}^n
    \left\{
      F(\xi+x_i(\beta_n-\beta))
      -
      F(\xi)
    \right\}
  \end{multline*}        
  (which follows from the definition of $W$)
  into (\ref{eq:Carroll1}).
  Indeed, since $a_n^{-1}(\beta_n-\beta)\to0$ (a.s.),
  the proof is complete by Lemma~1 and (A4).
\blueend\fi

\section{Proof of Lemma~\ref{lem:Carroll}}
\label{app:proof-of-lemma}

\ifFULL\bluebegin
  The purpose of this section is to give a proof of Lemma~\ref{lem:Carroll}
  in the notation of this paper
  (whereas the previous section just corrects various places in \citealt{carroll:1978}).
\blueend\fi

  The proof is modelled on the proof of the theorem
  in Carroll's technical report \citep{carroll:1978}
  and on Section~2 of~\citet{bahadur:1966}.
  We cannot use the result of \citet{carroll:1978}
  since our conditions are somewhat different.
  \ifFULL\bluebegin
    (For example,
    we do not have Carroll's condition (A3).)
    Besides, \citet{carroll:1978} contains numerous misprints
    and perhaps even errors.
  \blueend\fi
  Following \citet{carroll:1978},
  we only consider the case of simple linear regression ($p=1$).
  We will prove that (\ref{eq:Carroll}) holds for all $w$,
  so that $w$ will be a constant vector in $\mathbb{R}^p$
  throughout the proof.

  We start from the speed of convergence in the ridge regression estimate
  of regression weights.
  Let $a_n:=n^{-1/2}\ln n$.
  \begin{lemma}\label{lem:RR}
    Under our conditions,
    $\left|\hat w_n-w\right|=o(a_n)$ a.s.
  \end{lemma}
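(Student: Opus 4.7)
The plan is to decompose $\hat w_n - w$ algebraically into a deterministic bias and a stochastic term, then estimate each. Since we restrict to simple linear regression ($p=1$) throughout, $w \in \mathbb{R}$ and $Y = Xw + \xi$, so
$$
\hat w_n - w
= \frac{\sum_{i=1}^{n-1} x_i(x_i w + \xi_i)}{\sum_{i=1}^{n-1} x_i^2 + a} - w
= \frac{-aw + \sum_{i=1}^{n-1} x_i \xi_i}{\sum_{i=1}^{n-1} x_i^2 + a}.
$$

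First I would handle the denominator. Under (A1)--(A2), $\Sigma := \Expect(x_1^2) > 0$, and the strong law of large numbers yields $\tfrac{1}{n-1}\sum_{i=1}^{n-1} x_i^2 \to \Sigma$ a.s., so the denominator equals $(n-1)\Sigma(1 + o(1))$ a.s. The deterministic contribution $-aw$ in the numerator therefore yields a term of size $O(1/n)$, which is clearly $o(a_n)$.

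The main step is to control $\sum_{i=1}^{n-1} x_i \xi_i$. By (A1), (A3), and (A4), the summands $x_i \xi_i$ are IID with mean zero (since $\xi_i$ has mean zero and is independent of $x_i$) and with variance $\Expect(x_i^2)\Expect(\xi_i^2) = \sigma^2 \Sigma < \infty$. The Hartman--Wintner law of the iterated logarithm then gives
$$
\left| \sum_{i=1}^{n-1} x_i \xi_i \right| = O\bigl(\sqrt{n \log \log n}\bigr) \quad \text{a.s.}
$$
Dividing by $(n-1)\Sigma(1 + o(1))$ yields $|\hat w_n - w| = O\bigl(\sqrt{\log \log n /n}\bigr)$ a.s., and since $\sqrt{\log \log n} = o(\ln n)$ this is $o(n^{-1/2}\ln n) = o(a_n)$ a.s., as required.

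The only subtlety is verifying that the hypotheses of the classical one-dimensional LIL are met, but this is immediate once the IID, mean-zero, and finite-variance properties of $x_i \xi_i$ are noted. No sharper concentration is needed: the target rate $a_n = n^{-1/2}\ln n$ leaves a factor of $\ln n / \sqrt{\log\log n}$ to spare above the LIL rate, so the slack is substantial. This is in fact why the slightly unusual $\ln n$ in the definition of $a_n$ (rather than $\sqrt{\log\log n}$) is innocuous in the subsequent steps of Carroll's argument.
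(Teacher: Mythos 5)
Your proof is correct, and it is in fact considerably more explicit than the paper's, which consists of the single sentence that the claim ``follows immediately from (\ref{eq:Delta}) and (\ref{eq:Omega})'', i.e.\ from the formulas $\Expect(\hat w_n-w)=-a(X'X+aI)^{-1}w=O(1/n)$ and $\var(\hat w_n)=O(1/n)$. Strictly speaking, a mean and a variance of order $1/n$ do not by themselves yield an almost-sure rate; the intended completion of the paper's argument is that, conditionally on the objects, $\hat w_n-w$ is exactly Gaussian, so $\Prob\{|\hat w_n-w-\Delta_n|>a_n/2\}\le2\exp(-c\ln^2 n)$ is summable and Borel--Cantelli applies. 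You instead decompose $\hat w_n-w=(-aw+\sum_i x_i\xi_i)/(\sum_i x_i^2+a)$ and invoke the Hartman--Wintner law of the iterated logarithm for the IID, mean-zero, finite-variance summands $x_i\xi_i$, obtaining the sharper rate $O(\sqrt{\log\log n/n})$ a.s., which is $o(a_n)$ with room to spare. What each approach buys: the Gaussian-tail route exploits assumption (A4) and extends verbatim to general $p$ (coordinatewise), while your LIL route needs only finite second moments of $x_1\xi_1$ rather than Gaussian noise, and it makes transparent why the logarithmic factor in $a_n=n^{-1/2}\ln n$ is harmless. Both are valid under (A1)--(A4); your closing observation about the slack above the LIL rate is accurate and is exactly the reason the choice of $a_n$ causes no trouble downstream.
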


  \begin{proof}
    This follows immediately from (\ref{eq:Delta}) and (\ref{eq:Omega}).
  \end{proof}

  The proof uses the following random variables:
  \begin{align*}
    G_n(x)
    &:=
    n^{-1}
    \sum_{i=1}^n
    \Bigl(
      1_{\{r_i\le x\}}
      -
      1_{\{\xi_i\le\zeta_{\alpha}\}}
      -
      F\bigl(x+x_i(\hat w_n - w)\bigr)
      +
      F(\zeta_{\alpha})
    \Bigr),\\
    H_n
    &:=
    n^{1/2}
    \sup_{x\in J_n}
    \left|
      G_n(x)
    \right|,
  \end{align*}
  where $J_n:=[\zeta_{\alpha}-a_n,\zeta_{\alpha}+a_n]$ and
  \begin{equation}\label{eq:W}
    W_n(s,t)
    =
    n^{-1/2}
    \sum_{i=1}^n
    \Bigl(
      1_{\{\xi_i\le\zeta_{\alpha}+a_ns+a_ntx_i\}}
      -
      1_{\{\xi_i\le\zeta_{\alpha}\}}
      -
      F(\zeta_{\alpha}+a_ns+a_ntx_i)
      +
      F(\zeta_{\alpha})
    \Bigr).
  \end{equation}

  \begin{lemma}\label{lem:1}
    Under our conditions,
    \begin{equation}\label{eq:WW}
      \sup
      \left\{
        \left|W_n(s,t)\right|\mid s,t\in[0,1]
      \right\}
      \to
      0
      \quad
      \text{a.s.}
    \end{equation}
    and, therefore,
    $H_n\to0$ a.s.
  \end{lemma}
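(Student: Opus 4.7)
The plan is to prove (\ref{eq:WW}) via Bernstein's inequality on a sufficiently fine grid together with a monotonicity argument to control oscillation, and then to deduce $H_n \to 0$ by writing $G_n$ in terms of $W_n$ and invoking Lemma~\ref{lem:RR}.

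First I would condition on a realization of $(x_1, x_2, \ldots)$ in the probability-one event on which $n^{-1}\sum_{i=1}^n(1+|x_i|)$ converges (guaranteed by the SLLN, since $E|x_1|<\infty$ by (A2)). For fixed $(s,t)\in[0,1]^2$, the summands $Z_i(s,t)$ in (\ref{eq:W}) are conditionally independent, mean zero, bounded by $2$, with conditional variance at most $\|f\|_\infty a_n(s + t|x_i|) \le c\, a_n(1+|x_i|)$, so $\sum_{i=1}^n \var(Z_i(s,t))=O(na_n)=O(n^{1/2}\ln n)$ a.s. Bernstein's inequality then yields
$$
  P(|W_n(s,t)|>\epsilon \mid x_1,\ldots,x_n)
  \le 2\exp\!\left(-c\,\frac{\epsilon^2 n^{1/2}}{\ln n}\right).
$$
Taking a union bound over the grid $\{(j/n,k/n) : 0\le j,k\le n\}$ of $O(n^2)$ points gives a summable tail, so by Borel--Cantelli the grid maximum of $|W_n|$ tends to $0$ a.s.

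To pass from the grid to the full square, split $W_n = W_n^+ + W_n^-$ according to the sign of $x_i$: on $\{x_i\ge 0\}$ the indicator $1_{\{\xi_i\le\zeta_\alpha+a_ns+a_ntx_i\}}$ is jointly nondecreasing in $(s,t)$, and on $\{x_i<0\}$ nondecreasing in $s$ and nonincreasing in $t$. The standard monotonicity sandwich bounds the oscillation of each $W_n^\pm$ inside a cell of diameter $1/n$ by the deterministic variation of the centering $F$-terms between the cell corners, which is at most $n^{-1/2}\|f\|_\infty a_n \sum_{i=1}^n(1+|x_i|)/n = O(\ln n/n)\to 0$. This proves (\ref{eq:WW}). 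For the second claim, note that $r_i = \xi_i - x_i(\hat w_n - w)$, so $G_n(\zeta_\alpha+a_ns) = n^{-1/2} W_n(s, t_n)$ with $t_n := (\hat w_n - w)/a_n$, which tends to $0$ a.s.\ by Lemma~\ref{lem:RR}. Hence $H_n = \sup_{s\in[-1,1]}|W_n(s,t_n)| \le \sup_{(s,t)\in[-1,1]^2}|W_n(s,t)|$, and the proof of (\ref{eq:WW}) adapts verbatim (with sign changes in the monotonicity split) to this wider rectangle.

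The main technical obstacle is calibrating the grid: the spacing must be fine enough ($\sim 1/n$) that the monotonicity-based discretization error $O(\ln n / \text{grid size})$ vanishes, yet coarse enough that the union-bound factor $O(n^2)$ is still dominated by the Bernstein tail $\exp(-c\epsilon^2 n^{1/2}/\ln n)$. The split of the indices by the sign of $x_i$ is essential, because joint monotonicity in $(s,t)$ fails without it.
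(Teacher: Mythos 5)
Your proof is correct and follows essentially the same route as the paper's: Bernstein's inequality at fixed grid points (the paper uses a coarser grid of spacing $1/b_n$ with $b_n\sim\ln^2 n$ rather than $1/n$), a union bound with Borel--Cantelli, and control of the off-grid oscillation, followed by the observation that $r_i=\xi_i-x_i(\hat w_n-w)$ and Lemma~\ref{lem:RR} reduce $H_n$ to $W_n(\cdot,t_n)$ with $t_n\to0$. If anything, your discretization step is more complete than the paper's, which bounds only the variation of the $F$-centering terms by a Taylor expansion and leaves the oscillation of the indicator sums between grid points implicit, whereas you make the Bahadur-style monotonicity sandwich (with the necessary split by the sign of $x_i$) explicit and also explicitly handle the extension to $(s,t)\in[-1,1]^2$ required for the $H_n\to0$ claim.
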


  \begin{proof}
    Since $r_i=\xi_i-x_i(\hat w_n-w)$ and $\hat w_n-w=o(a_n)$ a.s.,
    it is indeed true that (\ref{eq:WW}) implies
    $H_n\to0$ a.s.;
    therefore, we will only prove~(\ref{eq:WW}).
    Let $b_n\sim\ln^2n$ be a sequence of positive integers.
    It suffices to consider only $s$ and $t$ of the form $\eta_{r,n}:=r/b_n$ for $r=0,\ldots,b_n$.
    To see this, apply Taylor's expansion:
    if
    $\left|s-\eta_{r,n}\right|\le b_n^{-1}$
    and
    $\left|t-\eta_{p,n}\right|\le b_n^{-1}$,
    then
    \begin{multline*}
      \left|
        n^{-1}
        \sum_{i=1}^n
        \left(
          F(\zeta_{\alpha}+sa_n+ta_nx_i)
          -	
          F(\zeta_{\alpha}+\eta_{r,n}a_n+\eta_{p,n}a_nx_i)
        \right)
      \right|\\
      \le
      n^{-1}
      \sum_{i=1}^n
      f(\zeta^*)
      a_n b_n^{-1} (1+\left|x_i\right|)
      =
      O(a_n b_n^{-1})
      =
      o(n^{-1/2})
      \quad
      \text{a.s.}
    \end{multline*}
    for some $\zeta^*$
    (we have used the integrability of $x_1$).

    For fixed $s$ and $t$ we can apply Bernstein's inequality
    (see, e.g., \citealt{gyorfi/etal:2002}, Lemma~A.2).
    Let us fix a sequence $x_1,x_2,\ldots$
    such that $\frac1n\sum_{i=1}^nx_i\to\mu$
    (which happens with probability one under our conditions
    for some $\mu$, namely for $\mu:=\Expect(x_1)$).
    The cumulative variance (conditional on $x_1,x_2,\ldots$)
    of the addends in~(\ref{eq:W}) does not exceed
    $$
      \sum_{i=1}^n
      (a_ns+a_ntx_i)
      =
      O(na_n)
    $$
    a.s.\ (this again uses the integrability of $x_1$);
    therefore, for any $\epsilon>0$,
    $$
      \Prob
      \left\{
        \left|W_n(s,t)\right| > \epsilon
      \right\}
      \le
      c_0
      \exp
      \left(
        -c_1 n^{1/4}
      \right)
    $$
    from some $n$ on,
    where $c_0$ and $c_1$ are constants depending on $\epsilon$.
    The probability that $\left|W_n(s,t)\right|>\epsilon$
    for some $n\ge N$ and some $s,t$ of the form $\eta_{r,n}$
    does not exceed
    $$
      \sum_{n=N}^{\infty}
      b_n^2
      c_0
      \exp
      \left(
        -c_1 n^{1/4}
      \right)
      \to
      0
      \quad
      (N\to\infty)
      \quad
      \text{a.s.}
    $$
    This completes the proof of the lemma.
  \end{proof}

  Remember that $k_n=\lceil\alpha n\rceil$.
  \begin{lemma}\label{lem:2}
    From some $n$ on, $r_{(k_n)}\in J_n$ a.s.
  \end{lemma}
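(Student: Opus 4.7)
The plan is to control the empirical cumulative distribution function of the residuals on the interval $J_n$ and deduce that $r_{(k_n)}$ lies in $J_n$ by comparing it with $k_n/n$. Write $F^r_n(x):=n^{-1}\sum_{i=1}^n 1_{\{r_i\le x\}}$. By the definition of $G_n$,
\begin{equation*}
  F^r_n(x)
  =
  F_n(\zeta_{\alpha})
  +
  G_n(x)
  +
  \frac{1}{n}\sum_{i=1}^n
  \bigl(F(x+x_i(\hat w_n-w))-F(\zeta_{\alpha})\bigr),
\end{equation*}
where $F_n$ is the empirical distribution function of $\xi_1,\dots,\xi_n$ at $\zeta_{\alpha}$. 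The inclusion $r_{(k_n)}\in J_n$ is equivalent (up to measure-zero events) to the two inequalities $F^r_n(\zeta_{\alpha}+a_n)\ge k_n/n$ and $F^r_n(\zeta_{\alpha}-a_n)<k_n/n$, so it suffices to show that these hold from some $n$ on almost surely.

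First I would estimate each of the three pieces on the right-hand side. By Lemma~\ref{lem:1}, $\sup_{x\in J_n}|G_n(x)|=H_n/n^{1/2}=o(n^{-1/2})$ a.s. By the law of the iterated logarithm applied to the Bernoulli sequence $1_{\{\xi_i\le\zeta_{\alpha}\}}$, $F_n(\zeta_{\alpha})-\alpha=O(\sqrt{\ln\ln n/n})$ a.s. For the deterministic-looking sum, a first-order Taylor expansion of $F$ around $\zeta_{\alpha}$ gives, for $x\in J_n$,
\begin{equation*}
  \frac{1}{n}\sum_{i=1}^n \bigl(F(x+x_i(\hat w_n-w))-F(\zeta_{\alpha})\bigr)
  =
  f(\zeta_{\alpha})(x-\zeta_{\alpha})
  +
  f(\zeta_{\alpha})(\hat w_n-w)\,\bar x_n
  +
  \rho_n(x),
\end{equation*}
with a remainder $\rho_n(x)$ controlled by $\sup_y f'(y)$ and $n^{-1}\sum_i(x-\zeta_{\alpha}+x_i(\hat w_n-w))^2$. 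Using Lemma~\ref{lem:RR} (so $\hat w_n-w=o(a_n)$ a.s.), the convergence $\bar x_n\to\mu$ a.s., and $n^{-1}\sum_i x_i^2\to\Expect(x_1^2)<\infty$, one checks that the middle term is $o(a_n)$ a.s.\ and $\sup_{x\in J_n}|\rho_n(x)|=o(a_n)$ a.s.

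Combining these estimates at $x=\zeta_{\alpha}+a_n$ yields
\begin{equation*}
  F^r_n(\zeta_{\alpha}+a_n)
  =
  \alpha
  +
  f(\zeta_{\alpha})\,a_n
  +
  o(a_n)
  \quad\text{a.s.},
\end{equation*}
and since $k_n/n=\alpha+O(1/n)=\alpha+o(a_n)$ and $f(\zeta_{\alpha})>0$, the right-hand side exceeds $k_n/n$ from some $n$ on a.s. The analogous lower bound at $x=\zeta_{\alpha}-a_n$ gives $F^r_n(\zeta_{\alpha}-a_n)<k_n/n$ eventually.

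The main obstacle I expect is the careful handling of the Taylor remainder when $x_i(\hat w_n-w)$ is not uniformly small: the naive bound $\max_i|x_i|=o(n^{1/2})$ from Lemma~\ref{lem:hin} combined with $\hat w_n-w=o(a_n)$ only gives $o(\ln n)$ per observation, which does not directly fit in $J_n$. The point is that we only need control of the \emph{average}, not the maximum, so the finite second moment of $x_1$ suffices; making this precise (and doing the same work on $G_n$ via Lemma~\ref{lem:1}, where the window $J_n$ enters through the rescaled variables $s,t\in[0,1]$ via $x+x_i(\hat w_n-w)=\zeta_{\alpha}+a_n s+a_n t x_i$ with $|s|,|t|\le 1$) is the bookkeeping step on which the whole argument hinges.
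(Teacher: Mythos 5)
Your argument is correct, and it reaches the conclusion by a route that shares the paper's ingredients (Lemma~\ref{lem:RR}, Lemma~\ref{lem:1}, a Taylor expansion of $F$) but assembles them differently. The paper proves only the upper inequality explicitly and works with the empirical distribution of the noise at the \emph{moving} point $\zeta_{\alpha}+a_n$: it introduces the event $Q_N(\epsilon)$ with a supremum over $t\in[0,\epsilon]$ to absorb the randomness of $\hat w_n-w$, controls the oscillation through $W_n(1,t)-W_n(1,0)$, and then needs a genuine large-deviation bound — Hoeffding's inequality gives $\Prob\{F_n(\zeta_{\alpha}+a_n)\le\alpha+\delta a_n\}\le n^{-c\ln n}$, which is summable, so the conclusion comes from Borel--Cantelli. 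You instead recentre everything at the \emph{fixed} point $\zeta_{\alpha}$ via the $G_n$ decomposition of the residual empirical distribution function, reuse the already-proved conclusion $H_n\to0$ of Lemma~\ref{lem:1} wholesale rather than re-deriving a $W_n$-type bound inside the proof, and then only need the law of the iterated logarithm at a single point, since $\sqrt{\ln\ln n/n}=o(a_n)$ while the deterministic drift $f(\zeta_{\alpha})a_n$ dominates. What your version buys is symmetry (both endpoints of $J_n$ are handled by one display) and a lighter concentration tool; what the paper's version buys is an explicit summable tail bound that does not rely on having the uniform $G_n$ estimate over all of $J_n$ in hand. Both establish $F^r_n(\zeta_{\alpha}\pm a_n)=\alpha\pm f(\zeta_{\alpha})a_n+o(a_n)$ a.s.\ in substance, and your bookkeeping of the Taylor remainder (second moments of the $x_i$ suffice because only the average enters) matches what the paper actually uses.
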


  \begin{proof}
    We will only show that $r_{(k_n)}\le\zeta_{\alpha}+a_n$
    from some $n$ on a.s.
    Since
    $$
      \Prob
      \left\{
        r_{(k_n)} > \zeta_{\alpha} + a_n
      \right\}
      \le
      \Prob
      \left\{
        \sum_{i=1}^n
	1_{\{\xi_i \le \zeta_{\alpha} + a_n + x_i(\hat w_n-w)\}}
	\le
	k_n
      \right\}.
    $$
    By Lemma~\ref{lem:RR},
    it suffices to show the existence of an $\epsilon>0$
    for which $Q_N(\epsilon)\to0$ as $N\to\infty$,
    where
    \begin{multline*}
      Q_N(\epsilon)
      :=
      \Prob
      \left\{
        \sum_{i=1}^n
	1_{\{\xi_i \le \zeta_{\alpha} + a_n + t a_n x_i\}}
	\le
	k_n
	\text{ for some $t\in[0,\epsilon]$ and $n\ge N$}
      \right\}\\
      =
      \Prob
      \biggl\{
        F_n(\zeta_{\alpha}+a_n)
	\le
	k_n / n
	+
	n^{-1}
	\sum_{i=1}^n
	\Bigl(
          F(\zeta_{\alpha}+a_n)
	  -
          F(\zeta_{\alpha}+a_n+ta_nx_i)
	\Bigr)\\
	-
	n^{-1/2}
	\left(
	  W_n(1,t) - W_n(1,0)
	\right)
	\text{ for some $t\in[0,\epsilon]$ and $n\ge N$}
      \biggr\}.
    \end{multline*}
    Using Lemma~\ref{lem:1} and the fact that
    \begin{align*}
      n^{-1}
      \sum_{i=1}^n
      &
      \Bigl(
        F(\zeta_{\alpha}+a_n)
	-
        F(\zeta_{\alpha}+a_n+ta_nx_i)
      \Bigr)\\
      &=
      -n^{-1}
      \sum_{i=1}^n
      f(\zeta_{\alpha}+a_n)
      ta_nx_i
      +
      O
      \left(
        n^{-1}
        \sum_{i=1}^n
        t^2a_n^2x_i^2
      \right)\\
      &=
      -n^{-1}
      \sum_{i=1}^n
      f(\zeta_{\alpha})
      ta_nx_i
      +
      O
      \left(
        n^{-1}
        \sum_{i=1}^n
        ta_n^2x_i
      \right)
      +
      O(a_n^2)\\
      &=
      -f(\zeta_{\alpha})
      ta_n\mu
      +
      O
      \left(
        (\ln\ln n)^{1/2}
        n^{-1/2}
        a_n
      \right)
      +
      O(a_n^2)\\
      &=
      -f(\zeta_{\alpha})
      ta_n\mu
      +
      o(n^{-1/2})
      \quad
      \text{a.s.}
    \end{align*}
    (where $\mu:=\Expect(x_1)$),
    we obtain
    \begin{multline*}      
      Q_N(\epsilon)
      =
      \Prob
      \Bigl\{
        F_n(\zeta_{\alpha}+a_n)
	\le
	\alpha
	-
        t a_n \mu f(\zeta_{\alpha})
	+
	o(n^{-1/2})\\      
	\text{ for some $t\in[0,\epsilon]$ and $n\ge N$}
      \Bigr\}.
    \end{multline*}      
    By Hoeffding's inequality
    (see, e.g., \citealt{gyorfi/etal:2002}, Lemma~A.3),
    when $\delta>0$ is sufficiently small,
    \begin{equation*}
      \Prob
      \left\{
        F_n(\zeta_{\alpha}+a_n)
	\le
	\alpha
	+
        \delta a_n
      \right\}
      \le
      \exp
      \left(
        -c na_n^2
      \right)
      =
      n^{-c\ln n}
    \end{equation*}
    for some constant $c>0$.
    This implies that indeed $Q_N(\epsilon)\to0$ as $N\to\infty$.
  \end{proof}

  Now we can finish the proof of Lemma~\ref{lem:Carroll}.
  Let $E_n$ be the empirical distribution function of $r_i$.
  Lemma~\ref{lem:RR} and the second order Taylor expansion imply
  \begin{multline}\label{eq:r}
    G_n(r_{(k_n)})
    =
    E_n(r_{(k_n)}) - F_n(\zeta_{\alpha})\\
    -
    n^{-1}
    \sum_{i=1}^n
    \Bigl(
      F(r_{(k_n)})
      +
      f(r_{(k_n)})
      x_i (\hat w_n-w)
      -
      F(\zeta_{\alpha})
    \Bigr)
    +
    O\left(\frac1n\sum_{i=1}^nx_i^2\right)
    o(a_n^2)\\
    =
    E_n(r_{(k_n)}) - F_n(\zeta_{\alpha})
    -
    F(r_{(k_n)})
    +
    F(\zeta_{\alpha})
    +
    n^{-1}
    \sum_{i=1}^n
    f(r_{(k_n)})
    x_i (\hat w_n-w)\\   
    +
    o(n^{-1/2})
    \quad
    \text{a.s.}
  \end{multline}
  Similarly,
  \begin{multline}     
  \label{eq:zeta}
    G_n(\zeta_{\alpha})
    =
    E_n(\zeta_{\alpha}) - F_n(\zeta_{\alpha})
    -
    F(\zeta_{\alpha})
    +
    F(\zeta_{\alpha})
    +
    n^{-1}
    \sum_{i=1}^n
    f(\zeta_{\alpha})
    x_i (\hat w_n-w)\\    
    +
    o(n^{-1/2})
    \quad
    \text{a.s.}
  \end{multline}          
  Subtracting (\ref{eq:r}) from (\ref{eq:zeta})
  and using Lemmas~\ref{lem:1} and~\ref{lem:2}
  and the fact that $E_n(r_{(k_n)})=k_n/n$,
  we obtain
  \begin{multline}\label{eq:Carroll-1}
    n^{1/2}
    \left|
      F(r_{(k_n)}) - F(\zeta_{\alpha}) - k_n/n + E_n(\zeta_{\alpha})
    \right|\\
    \le
    n^{1/2}
    n^{-1}
    \sum_{i=1}^n
    \left|
      f(r_{(k_n)})
      -
      f(\zeta_{\alpha})
    \right|
    x_i (\hat w_n-w))
    =
    o(n^{1/2}a_n^2)
    \to
    0
    \quad
    \text{a.s.}
  \end{multline}
  The statement of Lemma~\ref{lem:Carroll} can now be obtained by plugging
  $$
    F(r_{(k_n)}) - F(\zeta_{\alpha})
    =
    (r_{(k_n)}-\zeta_{\alpha}) f(\zeta_{\alpha})
    +
    o(n^{-1/2})
    \quad
    \text{a.s.}
  $$
  (which follows from the second order Taylor expansion and Lemma~\ref{lem:2})
  and
  \begin{multline*}      
    E_n(\zeta_{\alpha})
    =
    F_n(\zeta_{\alpha})
    +
    n^{-1/2} W_n(0,a_n^{-1}(\hat w_n-w))\\    
    +
    n^{-1}
    \sum_{i=1}^n
    \Bigl(
      F(\zeta_{\alpha}+x_i(\hat w_n-w))
      -
      F(\zeta_{\alpha})
    \Bigr)
  \end{multline*}        
  (which follows from the definition of $W$)
  into (\ref{eq:Carroll-1}).
  Indeed, the addend involving $W_n$ is $o(n^{-1/2})$ a.s.\ by Lemma~\ref{lem:1}
  and, as we will see momentarily,
  \begin{equation}\label{eq:remains}
    n^{-1}
    \sum_{i=1}^n
    \bigl(
      F(\zeta_{\alpha}+x_i(\hat w_n-w))
      -
      F(\zeta_{\alpha})
    \bigr)
    -
    f(\zeta_{\alpha})
    \mu(\hat w_n - w)
    =
    o(n^{-1/2})
    \quad
    \text{a.s.}
  \end{equation}
  Therefore, it remains to prove~(\ref{eq:remains}).
  By the second order Taylor expansion,
  the minuend on the left-hand side of~(\ref{eq:remains})
  can be rewritten as
  \begin{multline}\label{eq:Carroll-2}
    n^{-1}
    \sum_{i=1}^n
    f
    \bigl(
      \zeta_{\alpha}
    \bigr)
    x_i(\hat w_n-w))
    +
    O
    \left(
      n^{-1}
      \sum_{i=1}^n
      x_i^2
    \right)
    o(a_n^2)\\
    =
    n^{-1}
    \sum_{i=1}^n
    f(\zeta_{\alpha})
    x_i(\hat w_n-w))
    +
    o(n^{-1/2})
    \quad
    \text{a.s.}
  \end{multline}
  where we have used
  $a_n^{-1}(\hat w_n-w)\to0$ a.s.\ (Lemma~\ref{lem:RR})
  and $\Expect x_1^2<\infty$.
  And the difference between the first addend of~(\ref{eq:Carroll-2})
  and the subtrahend on the left-hand side of~(\ref{eq:remains})
  is $O(n^{-1} a_n (n\ln\ln n)^{1/2})=o(n^{-1/2})$.
\end{document}